\patchcommand\@starttoc{\begin{quote}}{\end{quote}}
\setlist[description]{style=multiline,topsep=4pt,align=parright}
\let\reftagform@=\tagform@
\def\tagform@#1{\maketag@@@{(\ignorespaces\textcolor{black}{#1}\unskip\@@italiccorr)}}
\newcommand{\iref}[1]{\textup{\reftagform@{\tcr{\ref{#1}}}}}
\newcommand{\kurdyka}{Kurdyka-\L ojasiewicz }
\newcommand{\exR}{\bar{\R}}
\newcommand{\jac}{\mathrm{Jac}}
\newcommand{\cljac}{\mathrm{Jac}^c}
\newcommand{\consjac}{\mathcal{J}}
\newcommand{\diff}{\mathrm{diff}}
\newcommand{\A}{\mathcal{A}}
\newcommand{\B}{\mathcal{B}}
\newcommand{\xstar}{x^{\star}}
\newcommand{\ystar}{y^{\star}}
\newcommand{\fb}{H}
\newcommand{\res}{\mbox{\rm res}\,}
\newcommand{\RR}{\mathbb{R}}
\begin{document}
\title{Differentiating Nonsmooth Solutions to Parametric Monotone Inclusion Problems}
\author{J\'{e}r\^{o}me Bolte\thanks{Toulouse School of Economics, University of Toulouse} \and
Edouard Pauwels\thanks{IRIT, CNRS, Universit\'e Toulouse III Paul Sabatier. Institut Universitaire de France (IUF)}\and 
Antonio Silveti-Falls\thanks{CVN, CentraleSup\'{e}lec, Universit\'{e} Paris-Saclay}
}
\date{}
\maketitle
\begin{flushleft}\end{flushleft}
\begin{abstract}
We leverage path differentiability and a recent result on nonsmooth implicit differentiation calculus to give sufficient conditions ensuring that the solution to a monotone inclusion problem will be path differentiable, with formulas for computing its generalized gradient. A direct consequence of our result is that these solutions happen to be differentiable almost everywhere.  Our approach is fully compatible with automatic differentiation and comes with assumptions which are easy to check, roughly speaking: semialgebraicity and strong monotonicity. We illustrate the scope of our results by considering three fundamental composite problem settings: strongly convex problems, dual solutions to convex minimization problems and primal-dual solutions to min-max problems.
\end{abstract}

\begin{keywords}
Maximal monotone operator, monotone inclusion, generalized equation, implicit differentiation, differentiating solutions, Clarke subdifferential, generalized gradient, conservative field.
\end{keywords}

\begin{AMS}
49J40, 49J52, 49J53, 49K40, 65K15, 68T07
\end{AMS}

\section{Introduction}

Consider the following parametric maximal monotone inclusion problem
\nnewq{\label{eq:introProblem}
0\in\A_\theta(x) + \B_\theta(x)
}
where $\theta$ is some parameter and, for each $\theta$, $\A_\theta\colon\R^n\rightrightarrows\R^n$ is maximal monotone (possibly set-valued) and $\B_\theta\colon\R^n\to\R^n$ is maximal monotone and Lipschitz continuous. For a fixed $\theta$, a problem of this form is called a \emph{generalized equation} \cite{robinson1979generalized} or \emph{variational inequality}, and it models a wide range of optimization problems \cite{beck2017first}. In fact, designing algorithms to find solution to maximal monotone inclusions is at the heart of convex optimization \cite{bauschke2011convex, combettes2004solving, combettes2005signal, boct2015inertial, boct2016inertial, palaniappan2016stochastic, malitsky2020forward} and consequently has received a lot of attention in the last decades \cite{auslender2000lagrangian, bauschke2004finding, borwein2010fifty, attouch2019modern}.

Assuming that the solution $\xstar(\theta)$ is unique for each $\theta$,  our main objective in this paper is to investigate the regularity and differentiability of $\xstar$ with respect to $\theta$ as well as to develop calculus rules for computing a generalized derivative associated to it. In general, the solution $\xstar$ is not a function of $\theta$ because there can be several solutions for a given $\theta$ and, even under the assumption that $\xstar$ is unique, it is not guaranteed that $\xstar$ will be differentiable with respect to $\theta$, therefore motivating further study.
Understanding the regularity of $\xstar$ as a function of $\theta$ has important applications in several areas: in deep learning for neural networks with implicit layers defined through monotone inclusions (e.g., monotone operator deep equilibrium networks \cite{winston2020monotone}, OptNet \cite{amos2017optnet}), in machine learning (hyperparameter tuning \cite{bolte2021nonsmooth}, meta-learning \cite{finn2017model}, dataset distillation \cite{blondel2021efficient}, adversarial examples \cite{liu2021investigating}), signal processing (\cite{combettes2004solving, combettes2005signal, fadili2009monotone}), and general nonsmooth bilevel optimization \cite{jane2005necessary, wang2022perturbation}, without being exhaustive. For this reason, many other works have studied regularity properties of solutions to various forms of \eqref{eq:introProblem}, using a myriad of different techniques that we will now discuss.

    \paragraph{Variational analytic methods} The study of solution mappings to parametric generalized equations can be traced back to variational analysis in the 1980s \cite{robinson1979generalized, robinson1983generalized}; \cite{robinson1980strongly} examined the Lipschitz continuity of the solution mapping when the single-valued monotone operator in the generalized equation is parametrized. This continued with results in \cite{king1992sensitivity, levy1994sensitivity, shapiro2003sensitivity} showing further the stability of the solution mapping but again avoiding parametrizing the nonsmooth/set-valued monotone operator. Another variational analytic approach is to use the notion of protodifferentiability developed in \cite{rockafellar1989proto}, which is used in \cite{adly2021sensitivity} to analyze the stability of solutions to parametric monotone inclusions. This approach was extended in \cite{wachsmuth2022resolvents} to show that the directional differentiability of the solution map under the assumption that the Lipschitz continuous operator is strongly monotone. Both of these works consider generalized equations and allow for both the Lipschitz and set-valued monotone operators to be parametrized, the same as the current work. In the language of sensitivity analysis, all data in the problem \eqref{eq:introProblem} can be perturbed. A similar approach to \cite{adly2021sensitivity} is used in \cite{berk2022lasso} to analyze the Lipschitz constant of the solution mapping to the lasso problem as a function of the penalization parameter. Finally, we mention that similar methods have also been applied to study the differentiability of the $\prox$ operator in \cite{poliquin1996generalized}, which is a special case of a generalized equation in which the Lipschitz operator is identically~$0$.

\paragraph{Nonsmooth implicit differentiation} As was already discussed, implicit differentiation is an important tool for characterizing properties of the solution to monotone inclusion problems, yet its application to nonsmooth problems remains a challenge. Specific cases involving the lasso and partial smoothness have been analyzed in \cite{bertrand2020implicit} using the weak derivative and in \cite{riis2020geometric, vaiter2017degrees} using the Riemannian gradient. Other specific cases have been worked out for projections onto the cone of positive semidefinite matrices \cite{malick2006clarke}, or solutions of semidefinite programming problems \cite{sun2006strong}, both of which use the Clarke implicit function theorem \cite{clarke1990optimization} to deduce Lipschitz continuity of the solution. From a computational perspective, a software library has been developed in \cite{blondel2021efficient} that can be used for automatic differentiation of implicitly defined functions. As we shall see, our approach is strongly based on the path differentiable implicit function theorem of \cite{bolte2021nonsmooth} which comes with a calculus compatible with the Python library presented in \cite{blondel2021efficient}.

\paragraph{Iterative differentiation/Unrolling} Another growing field in which differentiation of solutions is fundamental is unrolling. In that case one wishes to find a solution of an optimization problem together with its derivative by differentiating  some  optimization algorithm with respect to external parameters. Pioneering works are  \cite{gilbert1992automatic,beck1994automatic} and also \cite{griewank2003piggyback}. In a machine learning context, research has been done for smooth algorithms setting in \cite{pedregosa2016hyperparameter, lorraine2020optimizing, mehmood2020automatic} and in the nonsmooth setting in \cite{bolte2022automatic} for path differentiability, \cite{mehmood2022fixed} for partial smoothness, and \cite{ochs2015bilevel} using a specific Bregman divergence. This is generally treated through ad hoc techniques, using for instance the smooth implicit function theorem approach. In \cite{bolte2022automatic}, the approach is different and closer to ours as it uses the theory of conservative gradients. To understand the deep link this unrolling topic has with our present concerns, one needs to remind that iterates of algorithms are generally defined as solutions to a parametrized optimization problem.  So unrolling offers a wide field of applications for solution's differentiation techniques.  
Although we illustrate our results on general parametrized problems, let us emphasize that our results could also provide  results for ``unrolling", in the spirit of the iterative differentiation analysis of \cite{bolte2022automatic}.\\

Each approach has its benefits, e.g., the variational analytic methods exploiting protodifferentiability are more adapted to giving information about the Lipschitz constant of the solution mapping than what we will propose. The most salient point of our contribution is that we are able to guarantee the path differentiability of the solution, which legitimates, in turn, the use of formal derivatives of the solution. In contrast to nearly all of these works and many others on this subject, this means our approach is compatible with modern differential tools like automatic differentiation and the backpropagation algorithm. This compatibility is achieved by way of a flexible calculus that allows all the usual operations of smooth calculus, in particular, the chain rule for differentiation. In the language of the optimization community, our differential results are qualification-free, and, in terms of differential regularity, everything will boil down to checking that the problem is semialgebraic (or definable).

\paragraph{Our approach and its advantages} The general method we propose is to study the solution mapping $\xstar$ by first rewriting \eqref{eq:introProblem} as a locally Lipschitz fixed-point equation, using ideas from operator splitting methods for nonsmooth optimization \cite{combettes2005signal}. For generalized equations of the form given in \eqref{eq:introProblem}, we can use the resolvent $\mathcal{R}_{\A_\theta}$ to write the \emph{forward-backward} map $\fb$, which a solution must be a fixed point of:
\newq{
\xstar=\fb(\xstar,\theta):=\mathcal{R}_{\A_\theta}(\xstar-\mathcal{B}(\xstar)).
}
With this equation, we can continue by applying the nonsmooth implicit function theorem of \cite{bolte2021nonsmooth} to deduce regularity properties and an expression for the generalized derivative (i.e., the conservative mapping) of the solution mapping $\xstar(\theta)$. 

As has been discussed, the rise in popularity of modern automatic differentiation libraries \cite{abadi2016tensorflow, bradbury2018jax, paszke2017automatic} and their widespread use in machine learning calls for a flexible calculus at the crossroads of mathematics and computer science. For instance, almost the entire field of deep learning crucially relies on using  the renowned backpropagation algorithm to do training. In spite of this, most prior work on this subject has either only considered the smooth case or has ignored non-differentiability issues. Thus a major advantage of the present approach, in contrast to other works, is to provide results that are broadly applicable (e.g., for nonsmooth solutions) but which are also compatible with automatic differentiation.

Besides compatibility with automatic differentiation, another advantage of our work is that the formula given by \cite[Corollary 1]{bolte2021nonsmooth} to compute an implicit conservative gradient is formally the same as the formula used in the smooth case. More specifically, a key feature of \cite[Corollary 1]{bolte2021nonsmooth} that our results will inherit is its coherence with the smooth implicit function theorem - that is to say, elements of the conservative Jacobian associated to the solution mapping $\xstar$ can be rigorously computed by formal differentiation just as in the smooth implicit function theorem.

\paragraph{Main results} Our key technical result is an implicit function theorem for families of contractive Lipschitz continuous equations under path-differentiability assumptions. The simplicity of the contractivity  assumption allows us to derive a wealth of regularity results for parametric strongly monotone problems.
Our core result (\thmref{thm:stronglyMonotone})  asserts that if $\A_\theta$ or $\B_\theta$ is strongly monotone, then $\xstar$ is path differentiable and there is a formula to compute a conservative Jacobian for it, based on the Clarke Jacobians of $\mathcal{R}_{\gamma\A_\theta}$ and $\B_\theta$. Let us insist on the fact that path differentiability easily follows from semi-algebraic or definable assumptions.

As a consequence, many fundamental parametrized optimization problems can be studied, we provide three important classes of examples.
\thmref{thm:stronglyConvex} deals with sum composite strongly convex optimization problems, which are ubiquitous in many fields from signal processing \cite{combettes2005signal, day2011inversion, weese1992reliable} to machine learning \cite{bishop1995training, kakade2008complexity}. In the framework of generalized convex duality, in the spirit of the Fenchel-Rockafellar theorem, we provide in \thmref{thm:structuredDuality} a regularity result for dual  solutions to primal together with a calculus. In the min-max setting, under classical assumptions, we study the regularity of parametrize saddle points, this is \thmref{thm:minmax}. To be clear, the reach of our results extends beyond just these selected problems; our results represent a way to analyze solutions to any problem, which can be represented as a parametric monotone inclusion having this additive composite structure, encompassing a significant portion of the convex optimization problems in the literature.

Let us conclude by mentioning the fact the contractivity assumption which is behind our analysis in \lemref{lem:operatorNorm} is sharp in the sense that we are able to provide several counterexamples having apparently similar properties --as semialgebraic problems enjoying quadratic \L ojasiewicz inequalities-- which are not amenable to our optimization framework.

\subsection{Organization of the Paper}
In \secref{sec:background}, we review the necessary background material and notation, mostly regarding convex analysis, conservative calculus, and the nonsmooth implicit function theorem for path differentiable functions.  In \secref{sec:monotone}, we develop results for path differentiability of the solution to parametric monotone inclusion problems, formally stating the monotone inclusion problem we consider and the assumptions needed to ensure path differentiability of the solution mapping. In \secref{sec:functions}, we turn to convex optimization and explore sufficient conditions in terms of properties of the objective function to ensure the solution to a convex optimization problem is path differentiable. In \secref{sec:applications}, we consider some general convex optimization problems and show how to apply the results of \secref{sec:monotone} and \secref{sec:functions} to find expressions for implicit conservative Jacobians associated to the solution mappings. Finally, in \secref{sec:conclusion}, we conclude by noting some alternative formulations of the problem, its fixed point equations, and some other details that could have been chosen differently.

\section{Background}\label{sec:background}
\paragraph{Notation} The set of real numbers will be written as $\R$ and the extended real numbers as $\exR:=\R\cup\{+\infty\}$. We denote the set of $p$-times continuously differentiable functions on a given connected open subset $X \subset \R^n$ by $C^p(X)$ and denote the set of $C^1(X)$ functions whose gradient is Lipschitz continuous by $C^{1,1}(X)$. We will use $\Id_n$ to denote the identity matrix in $\R^{n\times n}$ and $\Id$ to denote the identity mapping. A set-valued map $\A \colon \R^n \rightrightarrows \R^m$, is a function from $\R^n$ to subsets of $\R^m$ (including the empty set). The \emph{graph} of a set-valued mapping $\A \colon \R^n \rightrightarrows \R^m$ will be denoted $\gra{\A} :=\brac{(x,u) \in \R^n \times \R^m \colon u\in \A(x)}$. We denote the operator norm of a linear operator $K:\R^n\to\R^m$ as
\newq{
\norm{K}{\mathrm{op}} := \sup\limits_{v\in\R^n}\frac{\norm{Kv}{}}{\norm{v}{}}
}
and extend this to sets of linear operators $\mathcal{K}=\{K_\omega\}_{\omega\in\Omega}$ in the following way
\newq{
\norm{\mathcal{K}}{\mathrm{op}} := \sup\limits_{K\in\mathcal{K}}\sup\limits_{v\in\R^n}\frac{\norm{Kv}{}}{\norm{v}{}}.
}
\subsection{Convex Analysis}
The following definitions and notations coming from convex analysis are well-known and can be found, for instance, in \cite{bauschke2011convex}.

\begin{definition}[Monotone Operator]
A set-valued mapping $\A\colon \R^n \rightrightarrows \R^n$ is called {\em monotone} if there exists $\alpha\geq 0$ such that for all $(x,u)\in\gra(\A)$ and $(y,v)\in\gra(\A)$,
\newq{
\ip{u-v,x-y}{} \geq \alpha \norm{x-y}{}^2.
}
If $\alpha>0$ then $\A$ is called {\em $\alpha$-strongly monotone}.
\end{definition}
A monotone operator $\A$ is said to be \emph{maximal} if its graph is not contained in any other monotone operator. Recall that the \emph{resolvent} of a maximal monotone operator $\A_\theta\colon\R^n\rightrightarrows\R^n$ is the function $\mathcal{R}_{\A_\theta}\colon\R^n\to\R^n$ defined to be $\mathcal{R}_{\A_\theta}:=(\Id + \A_\theta)^{-1}$. There is a special relationship between closed convex proper functions and maximal monotone operators, which we summarize in the next example.
\begin{example}
Let $f\colon \R^n\to\exR$ be a closed convex proper function, then the \emph{subdifferential} of $f$, $\partial f(x):=\brac{u\colon \forall y\in\R^n, f(y)\geq f(x) + \ip{u, y-x}{}}$, is a maximal monotone operator \cite{rockafellar1970maximal} and the resolvent $\mathcal{R}_{\partial f}$ is the \emph{prox operator}  \cite[Example 23.3]{bauschke2011convex} given by
$\prox_{f}(x):= \argmin\limits_{u\in\R^n} f(u) + \frac{1}{2}\norm{x-u}{}^2$. Additionally, if $f$ is $\alpha$-strongly convex, then $\partial f$ is $\alpha$-strongly monotone \cite[Example 22.3(iv)]{bauschke2011convex}.
\end{example}

\subsection{Conservative calculus and path differentiability}
The following notions generalize the concept of differentiability to locally Lipschitz functions, from Clarke Jacobians to conservative Jacobians. In contrast with  Clarke Jacobians, conservative Jacobians \cite{bolte2020mathematical} offer a way to extend differentiation to locally Lipschitz functions in a way that is compatible with differential calculus.
\begin{definition}[Clarke Jacobian]
The \emph{Clarke Jacobian} of a locally Lipschitz function $f\colon\R^n\to\R^m$ at a point $x$ is defined to be
\begin{equation*}
\cljac_f(x) := \conv\left\{\lim\limits_{k\to+\infty}\jac_f(x_k):x_k\in\diff f, \lim\limits_{k\to+\infty}x_k=x\right\},
\end{equation*}
where $\diff f \subset \R^n$ is the set of full measure where $f$ is differentiable in the classical sense. 
\end{definition}
The following lemma comes from \cite[Proposition 2.1.2(a)]{clarke1990optimization}, it is a nonsmooth generalization of the classical result that a $\beta$-Lipschitz continuous differentiable function has gradient bounded by $\beta$ in norm.
\begin{lemma}[\cite{clarke1990optimization}]\label{lem:clarkeBound}
Let $U\subset \R^n$ be an open set and consider a Lipschitz continuous function $f\colon U\to\R^m$. Then $f$ is Lipschitz continuous with constant $\beta$, if and only if, for all $x\in U$, $\norm{\cljac_f(x)}{\mathrm{op}}\leq \beta$.
\end{lemma}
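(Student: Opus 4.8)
The plan is to prove the two implications of the equivalence separately, using Rademacher's theorem (applicable since $f$ is Lipschitz) together with two elementary facts about the operator-norm ball $B_\beta := \{M \in \R^{m\times n} : \norm{M}{\mathrm{op}} \le \beta\}$, namely that it is closed and convex. The forward implication is the direct nonsmooth generalization of the classical bound and is essentially immediate once these facts are in place; the reverse implication, which recovers a Lipschitz estimate on $f$ from a pointwise bound on its generalized derivative, is where the real work lies.

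For the forward implication, I would assume $f$ is $\beta$-Lipschitz. At any point $x_k \in \diff f$ and any unit vector $v$, differentiability gives $\norm{\jac_f(x_k) v}{} = \lim_{t\to 0} \norm{f(x_k + tv) - f(x_k)}{}/|t| \le \beta$, so that $\jac_f(x_k) \in B_\beta$. Since $B_\beta$ is closed, every limit $\lim_k \jac_f(x_k)$ with $x_k \to x$ again lies in $B_\beta$; since $B_\beta$ is convex, so does their convex hull. By the definition of the Clarke Jacobian this yields $\cljac_f(x) \subseteq B_\beta$, that is $\norm{\cljac_f(x)}{\mathrm{op}} \le \beta$ for every $x \in U$.

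For the reverse implication, I would assume $\norm{\cljac_f(x)}{\mathrm{op}} \le \beta$ for all $x$. First note that whenever $f$ is differentiable at $z$, the constant sequence $x_k \equiv z$ shows $\jac_f(z) \in \cljac_f(z)$, hence $\norm{\jac_f(z)}{\mathrm{op}} \le \beta$ on the full-measure set $\diff f$. Fix $x, y$ with $[x,y] \subset U$ and set $v = y - x$. By Fubini, the lines in direction $v$ that meet the null set $U \setminus \diff f$ in a set of positive one-dimensional measure themselves form a null set, so arbitrarily close to $x$ there are base points $x'$ for which $t \mapsto f(x' + tv)$ is differentiable for a.e. $t \in [0,1]$. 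Being Lipschitz, this restriction is absolutely continuous, so the fundamental theorem of calculus gives $f(x'+v) - f(x') = \int_0^1 \jac_f(x'+tv)\, v \, \dd t$ and therefore $\norm{f(x'+v) - f(x')}{} \le \int_0^1 \norm{\jac_f(x'+tv)}{\mathrm{op}}\norm{v}{}\, \dd t \le \beta \norm{v}{}$. Letting $x' \to x$ and using continuity of $f$ yields $\norm{f(y) - f(x)}{} \le \beta \norm{y - x}{}$.

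The main obstacle is precisely this integral representation in the reverse implication: one must justify that $f$ is absolutely continuous along (almost every) line segment so that the fundamental theorem of calculus applies and the a.e. bound on $\jac_f$ transfers to the increment, the Fubini/approximation step being what lets one avoid the possibly bad behaviour of the given segment $[x,y]$ relative to $\diff f$, with continuity of $f$ closing the limit. A subtlety worth flagging is that on a non-convex $U$ the estimate is really local, or valid only along segments contained in $U$; genuine global $\beta$-Lipschitzness in the Euclidean metric requires $U$ convex (otherwise one obtains it for the geodesic distance). An equivalent and more economical route is to invoke Lebourg's mean value theorem, which packages exactly this increment-to-Jacobian passage and immediately bounds $\norm{f(y) - f(x)}{}$ by $\beta \norm{y - x}{}$ once the pointwise bound $\norm{\cljac_f}{\mathrm{op}} \le \beta$ is known.
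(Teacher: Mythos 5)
The paper does not prove this lemma at all: it is imported verbatim as \cite[Proposition~2.1.2(a)]{clarke1990optimization}, so there is no in-paper argument to compare against. Your self-contained proof is correct. The forward implication is exactly the standard one: the difference quotient bound puts $\jac_f(x_k)$ in the operator-norm ball $B_\beta$ on $\diff f$, and closedness plus convexity of $B_\beta$ push this through the limit and convex hull in the definition of $\cljac_f$. The reverse implication is also sound: the constant-sequence observation correctly places $\jac_f(z)$ inside $\cljac_f(z)$ on $\diff f$, the Fubini argument legitimately selects nearby base points whose segments meet $\diff f$ in full one-dimensional measure, and absolute continuity of the Lipschitz restriction $t\mapsto f(x'+tv)$ justifies the fundamental theorem of calculus; one should just note that for $x'$ close enough to $x$ the translated segment $[x',x'+v]$ still lies in the open set $U$ (a routine compactness point). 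Your caveat about non-convex $U$ is well taken and is in fact a small imprecision in the lemma as stated -- the reverse direction yields $\beta$-Lipschitzness only along segments contained in $U$ (equivalently, for the geodesic metric) -- but this is harmless for the paper, which only invokes the lemma on all of $\R^n$ or on products of full spaces. The alternative route via Lebourg's mean value theorem that you mention is indeed the textbook shortcut for the reverse direction and would be equally acceptable.
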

\begin{definition}[Conservative Jacobian \cite{bolte2020mathematical}]
\label{def:conservativeJacobian}
A {\em conservative Jacobian} for a locally Lipschitz function $f:\R^n\to\R^m$ is a set-valued mapping $\consjac_f\colon \R^n\rightrightarrows\R^{m\times n}$ which is nonempty, locally bounded, graph closed, and satisfies, for any absolutely continuous curve $\gamma\colon[0,1]\to\R^n$,
\begin{equation*}
\forall u\in \consjac_f(\gamma(t)), \quad\frac{d}{dt}f(\gamma(t)) = \ip{u, \dot{\gamma}(t)}{}\ \mbox{for almost all } t \in [0,1]
\end{equation*}
\end{definition}

A function $f:\R^n\to\R^m$ is called \emph{path differentiable} if it is locally Lipschitz and admits a conservative Jacobian $\consjac_f$. This is equivalent to the Clarke Jacobian $\cljac_f$ being conservative for $f$.

\smallskip

Path differentiable functions are ubiquitous among locally Lipschitz functions. The most prominent class of examples is that of semialgebraic functions. For an introduction to the subject of semialgebraic functions, we refer the interested reader to \cite{coste2000introduction}. We simply recall here that a function $f\colon\R^n\to\R^n$ is said to be \emph{semialgebraic} if $\gra{f}$ is a semialgebraic set, i.e., it can be written as the finite union and-or intersection of polynomial equations and inequalities. Let us also mention that all locally Lipschitz definable functions are path differentiable, see \cite{bolte2007clarke} and \cite{bolte2020mathematical}.

\smallskip

Given a path differentiable function $f:\R^{p+n}\to\R^n$, we will write $\cljac_{x,f}(\theta,x):=\{V \colon [U\ V]\in\cljac_{f}(\theta,x)\}\subset \R^{n\times n}$ and refer to this object as the Clarke Jacobian of $f$ with respect to $x$ (and the analog for when a conservative Jacobian has been specified). Similarly, we will write $\cljac_{\theta,f}(\theta,x):=\{U \colon [U\ V]\in\cljac_{f}(\theta,x)\}\subset\R^{n\times p}$ for the Clarke Jacobian of $f$ with respect to $\theta$. A very important but subtle point is that these sets are the projections of the joint Clarke Jacobians, which are possibly distinct from the sets given by fixing $\theta$ and computing the conservative Jacobian or the Clarke Jacobian with respect to $x$ alone.

\medskip

The following nonsmooth implicit differentiation theorem from \cite[Corollary 1]{bolte2021nonsmooth} is the main tool with which we can prove path differentiability and calculate elements of conservative Jacobians of solutions to monotone inclusions. Its main requirements are a path differentiable defining equation $f$ and an invertibility condition on the elements of a conservative Jacobian associated with $f$.

\begin{theorem}[Conservative implicit function theorem \cite{bolte2021nonsmooth}]\label{thm:nonsmoothImplicit}
Let $f:\R^n\times\R^m\to\R^m$ be path differentiable with conservative Jacobian $\consjac_f$. Let $(\hat{x},\hat{y})\in\R^n\times\R^m$ be such that $f(\hat{x},\hat{y})=0$. Assume that $\consjac_f(\hat{x},\hat{y})$ is convex and that, for each $[U\ V ]\in\consjac_f(\hat{x},\hat{y})$, the matrix $V$ is invertible. Then, there exists an open neighborhood $C\times D\subset\R^n\times\R^m$ of $(\hat{x},\hat{y})$ and a path differentiable function $g:C\to D$ such that, for each $x\in C$,
\begin{equation*}
f(x,g(x)) =0
\end{equation*}
and $g$ admits a conservative Jacobian given, for each $x\in C$, by
\begin{equation*}
\consjac_g\colon x\rightrightarrows \{-V^{-1}U:[U\ V]\in\consjac_f(x,g(x))\}.
\end{equation*}
\end{theorem}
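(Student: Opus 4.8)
The plan is to prove the statement in two stages: first establish the \emph{existence} of a locally Lipschitz solution map $g$ by reducing the hypothesis to the Clarke setting and invoking Clarke's implicit function theorem, and then verify that the proposed set-valued map is genuinely a conservative Jacobian for $g$ by a direct computation along absolutely continuous curves.

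For the first stage, I would transfer the invertibility hypothesis from $\consjac_f$ to the Clarke Jacobian $\cljac_f$. Recall that the Clarke Jacobian is the minimal convex-valued conservative Jacobian \cite{bolte2020mathematical}: for every conservative Jacobian one has $\cljac_f(\hat x,\hat y)\subseteq \conv \consjac_f(\hat x,\hat y)$, which, together with the convexity assumption on $\consjac_f(\hat x,\hat y)$, yields $\cljac_f(\hat x,\hat y)\subseteq \consjac_f(\hat x,\hat y)$. Consequently every $[U\ V]\in\cljac_f(\hat x,\hat y)$ has $V$ invertible, and in particular the projection of $\cljac_f(\hat x,\hat y)$ onto the $y$-block consists of nonsingular matrices. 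This is exactly the maximal-rank condition required by the Clarke implicit function theorem \cite{clarke1990optimization}, which then produces an open neighborhood $C\times D$ of $(\hat x,\hat y)$ and a locally Lipschitz $g\colon C\to D$ with $g(\hat x)=\hat y$ and $f(x,g(x))=0$ for all $x\in C$. Here the subtlety flagged earlier is important: Clarke's theorem is phrased in terms of the projection of the \emph{joint} Clarke Jacobian onto the $y$-coordinates, which is precisely the object our hypothesis controls, and not in terms of the Clarke Jacobian of the partial map $y\mapsto f(x,y)$.

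For the second stage, set $D_g(x):=\{-V^{-1}U:[U\ V]\in\consjac_f(x,g(x))\}$. I would first check that $D_g$ is nonempty, locally bounded and graph closed. The only point requiring care is that $V^{-1}$ be defined and uniformly bounded near $(\hat x,\hat y)$: since $\consjac_f$ is locally bounded and graph closed it is upper semicontinuous with compact values, and the set of invertible matrices is open, so invertibility of all $V$-blocks at $(\hat x,\hat y)$ persists on a neighborhood after possibly shrinking $C\times D$; continuity of $g$ and of matrix inversion then give the remaining properties. For the defining curve property, take any absolutely continuous $\gamma\colon[0,1]\to C$. Since $g$ is locally Lipschitz, the curve $c(t):=(\gamma(t),g(\gamma(t)))$ is absolutely continuous in $\R^n\times\R^m$, and $f(c(t))\equiv 0$. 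Applying the conservativity of $\consjac_f$ along $c$, for almost every $t$ and for \emph{every} $[U\ V]\in\consjac_f(c(t))$ one has $0=\tfrac{d}{dt}f(c(t))=U\dot\gamma(t)+V\tfrac{d}{dt}g(\gamma(t))$; solving via invertibility of $V$ gives $\tfrac{d}{dt}g(\gamma(t))=-V^{-1}U\,\dot\gamma(t)$. As this holds for all such $[U\ V]$, every element of $D_g(\gamma(t))$ reproduces the almost-everywhere derivative of $t\mapsto g(\gamma(t))$, which is exactly the defining property of a conservative Jacobian. Hence $D_g$ is conservative for $g$, so $g$ is path differentiable with $\consjac_g=D_g$.

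The main obstacle is concentrated in the first stage and in the local-boundedness check: correctly transferring invertibility from the (possibly large) conservative Jacobian to the projected Clarke Jacobian so that Clarke's theorem applies, and then propagating invertibility to a full neighborhood so that the formula $-V^{-1}U$ is well defined and bounded. Once existence and local boundedness are secured, the conservativity identity itself is a short consequence of composing the implicit relation $f(x,g(x))=0$ with an arbitrary absolutely continuous curve and reading off the chain rule.
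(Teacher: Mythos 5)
The paper does not prove this statement; it is imported verbatim as \cite[Corollary 1]{bolte2021nonsmooth}, so there is no in-paper argument to compare against. Your two-stage proof is correct and is essentially the argument of the cited source: the minimality of the Clarke Jacobian among convex-valued conservative Jacobians transfers the invertibility hypothesis to $\cljac_f(\hat{x},\hat{y})$ so that Clarke's implicit function theorem \cite{clarke1990optimization} yields a locally Lipschitz $g$, and the conservativity of $x\rightrightarrows\{-V^{-1}U\}$ then follows by differentiating $f(\gamma(t),g(\gamma(t)))\equiv 0$ along absolutely continuous curves, with your compactness and upper-semicontinuity argument correctly securing uniform invertibility of the $V$-blocks on a shrunken neighborhood so that the formula is well defined, locally bounded and graph closed.
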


In contrast to the accessibility of the formulas involved, the invertibility condition needed to apply \thmref{thm:nonsmoothImplicit} is more difficult to verify than its smooth counterpart due to the fact that the ordinary gradient is a singleton while the conservative gradient is set-valued. Indeed, for smooth implicit differentiation, it suffices to check the invertibility of a single matrix, while in the nonsmooth setting, one is tasked with showing the invertibility of {\em every}  element of a set of matrices. Additionally, the invertibility of the matrix computed in the smooth setting can be checked during runtime while, in the nonsmooth setting, checking the matrix computed at runtime will not be sufficient to ensure that the invertibility condition is holding (see \cite[Section 5]{bolte2021nonsmooth}). For these reasons, it is imperative to have general sufficient conditions outlined which  guarantee that the invertibility condition is satisfied, which is what we develop in the following sections.

\paragraph{Analogy with the Smooth Case}
One way to frame our work is as a study of how the  calculus for solutions to smooth convex parametric optimization persists in nonsmooth convex parametric optimization thanks to conservative Jacobians. To explain this further, let us describe first a typical application of the smooth implicit function theorem to study convex parametric problems. Consider
\newq{
\min\limits_{x\in\R^n}f(\theta,x)
}
where $f\colon\R^p\times\R^n$ is continuously differentiable jointly in $(\theta,x)$, twice-differentiable in $x$, and convex in $x$ for all $\theta\in\R^p$. To examine the existence and regularity of a solution mapping $\xstar$ as a function of $\theta$, one can use the smooth implicit function theorem on the optimality condition $\nabla_x f(\theta,\xstar)=0$ to get $\frac{\partial \xstar}{\partial\theta}(\theta) = -(\nabla_x^2 f(\theta,\xstar(\theta)))^{-1} \frac{\partial}{\partial\theta}\nabla_x f(\theta,\xstar(\theta))$. In addition to the differentiability assumptions we've made, the application of the smooth implicit function here requires the Hessian $\nabla_x^2 f (\theta,\xstar)$ to be invertible.

For a twice-differentiable convex function, the invertibility of the Hessian $\nabla_x^2 f(\theta,x)$ locally around $\xstar$ is equivalent to the strong convexity of the function $f$ locally around $\xstar$, which is itself equivalent to a local quadratic growth condition $f(x)-f(y) - \ip{\nabla f(y),x-y}{} \geq \frac{\rho}{2}\norm{x-y}{}^2$ for some $\rho>0$, for all $x,y$ in some neighborhood of $\xstar$. A byproduct of this paper is to investigate to what extent this trifold equivalency fails to hold when $f$ is no longer assumed to be twice-differentiable. To this end, we are able to show positive and negative results in \secref{sec:functions}: strong convexity is sufficient to ensure the invertibility condition required by the nonsmooth implicit function theorem of \cite[Corollary 1]{bolte2021nonsmooth} holds, meanwhile a local quadratic growth condition is insufficient even when $f$ is a semialgebraic function.


\section{Solutions to monotone inclusions}\label{sec:monotone}

\subsection{Regularity assumptions and Lipschitz reformulations} 

\paragraph{A Lipschitz reformulation} We begin this section by formally defining the parametric monotone inclusion problem we are considering (whose solution we seek to differentiate) and the assumptions we impose on it. When dealing with parametrized mappings like $\A\colon\R^m\times\R^n\to\R^n$, it will be convenient to use subscript notation $\A_\theta$ to denote the mapping corresponding to $\A(\theta,\cdot)\colon\R^n\to\R^n$ --~this notation will be used frequently throughout the rest of the paper.

\begin{assumption}[Path differentiability]\label{ass:pathDifferentiable}{\rm Let $\Theta \subset \R^p$ be a nonempty connected open set, $\A\colon \Theta\times\R^n\to\R^n$, $\B\colon\Theta\times\R^n\to\R^n$, and $\gamma>0$ be a stepsize. For all $\theta\in\Theta$, assume the following two conditions hold,
    \begin{enumerate}
        \item $\A(\theta,\cdot):\R^n\rightrightarrows\R^n$ is maximal monotone and $\B(\theta,\cdot):\R^n\to\R^n$ is maximal monotone and locally Lipschitz. Furthermore the solution set to the following inclusion is nonempty
        \begin{equation}\label{eq:monotoneInclusion}\tag{$\mathscr{P}_{\mathrm{mono}}$}
            0 \in \A(\theta, \cdot) + \B(\theta, \cdot).
        \end{equation}
        \item The resolvent $\mathcal{R}_{\gamma \A_\theta}$ and the map $\B$ are both locally Lipschitz and path differentiable, jointly in $(\theta,x)$, so that the function 
        \begin{equation}
        \fb(\theta,x) := \mathcal{R}_{\gamma\A_\theta}(x-\gamma \B_\theta(x))
        \end{equation}
        is path differentiable jointly in $(\theta,x)$.    	    
    \end{enumerate}
}\end{assumption}

There are many different fixed-point reformulations 
 of \eqref{eq:monotoneInclusion} one can choose from, each one inducing a function $\fb_\theta$, which we discuss in \secref{sec:conclusion}. The Lipschitz mapping $\fb_\theta$  we have opted for is reminiscent of the forward-backward splitting algorithm \cite{lions1979splitting}. It is general enough to cover a variety of monotone inclusions coming from both smooth and nonsmooth convex optimization problems. Similar to the choice of $\fb_\theta$, the choice of the constant $\gamma$ in \assref{ass:pathDifferentiable} is  also arbitrary provided $\gamma$ is positive. Note also that the solution $\xstar$ does not depend on $\gamma$, although it can be defined as a fixed point of $\fb_\theta$, which depends on $\gamma$. In each theorem, this constant will be finely tuned so as to obtain the most general regularity results possible. It is important to understand here that the regularity properties we will obtain in the following sections may depend on the reformulation in \assref{ass:pathDifferentiable} that we have chosen. Whether it is the case or not is a matter for future research.


\medskip


We define the \emph{residual function} $\res:\R^{p+n}\to\R^n$ to be 
\begin{equation}\label{eq:residualFunction}
\res(\theta,x):= x - \fb_\theta(x) \tag{$\mathscr{P}_{\mathrm{mono-bis}}$}
\end{equation}
so that
\begin{equation}\res(\theta,x^*(\theta))=0,
\end{equation}
whenever the expression is well-defined. 

An essential fact is that this equation will be automatically path differentiable if its constituents are semialgebraic, or more generally definable \cite{coste2000introduction, bolte2007clarke, bolte2020mathematical}. Classically, this will also imply that the solution mapping is semialgebraic (as a set-valued mapping).

\paragraph{First-order properties of the residual equation}
Let us describe the  key first-order objects that will ensure the existence of solution maps and allow us to establish their path-differentiability. We work under \assref{ass:pathDifferentiable}.

Set $T\colon\Theta\times\R^n\to\R^{n}$ to be $T(\theta,x):= \mathcal{R}_{\gamma\A_\theta}(x)$ and consider the two following conservative Jacobians relative to the path differentiable mapping $H$:
\begin{align}
&\consjac_{\fb}(\theta,x)
&&=\{\begin{bmatrix}U-\gamma VW & V(\Id_n-\gamma Z)\end{bmatrix} \colon [U\ V]\in\cljac_T(\theta,x-\gamma\B_\theta(x)), [W\ Z]\in\cljac_{\B}(\theta,x)\}.\notag
 \\
&\consjac_{x,\fb}(\theta,x)
&&= \cljac_{x,T}(\theta,x-\gamma\B_\theta(x))\times (\Id_n-\gamma\cljac_{x,\B}(\theta,x)). \label{def:jacobianChoice}\end{align}

The first set valued map $\consjac_{\fb}$ given in  \eqref{def:jacobianChoice} is simply obtained by a formal application of the rules of differential calculus to the composite structure of $H$ in \assref{ass:pathDifferentiable} using Clarke Jacobians instead of classical Jacobians, while the second is a partial derivative version obtained by mere projection.\\
 
 \noindent
Indeed, define $S\colon \Theta\times\R^n\to\Theta\times\R^n$ to be $S(\theta,x):= (\theta,x-\gamma\B(\theta,x))$  so that $T(S(\theta,x)) = \fb(\theta,x)$. One can check that $\consjac_{\fb}$ is the following product of Clarke Jacobians for $(\theta,x)\in\Theta\times\R^n$
\newq{
\consjac_{\fb}(\theta,x) &= \cljac_{T}(S(\theta,x))\times \cljac_S(\theta,x)\\
&=\{\begin{bmatrix}U & V\end{bmatrix}\times\begin{bmatrix}\Id_p & 0\\ -\gamma W & \Id_n-\gamma Z\end{bmatrix} \colon [U\ V]\in\cljac_{T}(\theta,x-\gamma\B_\theta(x)), [W\ Z]\in\cljac_{\B}(\theta,x)\}\\
&=\{\begin{bmatrix}U-\gamma VW & V(\Id_n-\gamma Z)\end{bmatrix} \colon [U\ V]\in\cljac_T(\theta,x-\gamma\B_\theta(x)), [W\ Z]\in\cljac_{\B}(\theta,x)\}
}
which is a conservative Jacobian for $\fb$. Consequently we have the following conservative Jacobian with respect to $x$ for $\fb$, for each fixed $(\theta,x)\in\Theta\times\R^n$,
\newq{
\consjac_{x,\fb}(\theta,x) &= \{V(\Id_n-\gamma Z) \colon [U\ V]\in\cljac_{T}(\theta,x-\gamma\B_\theta(x)), [W\ Z]\in\cljac_{\B}(\theta,x)\}\\
&= \cljac_{x,T}(\theta,x-\gamma\B_\theta(x))\times (\Id_n-\gamma\cljac_{x,\B}(\theta,x)),
}
which concludes our explanation.\\

\smallskip

Let us now define a conservative Jacobian for the residual function $\res$ from \eqref{eq:residualFunction} through:
\newq{
\consjac_{\res}(\theta,x) = \{\begin{bmatrix}\gamma VW-U & \Id_n-V(\Id_n-\gamma Z)\end{bmatrix} \colon [U\ V]\in\cljac_T(\theta, x-\gamma\B_\theta(x)), [W\ Z]\in \cljac_{\B}(\theta,x)\},
}
for each $(\theta,x)\in\Theta\times\R^n$.
\begin{remark}\label{rem:choiceOfJacobian}
The choice of $\consjac_{x,\fb}(\theta,x)$ in \eqref{def:jacobianChoice} corresponds to the set-valued object computed by applying formal differentiation, i.e., the chain rule,  to $\fb$ as a composition of functions. Additionally, the particular form of $\consjac_{x,\fb}(\theta,x)$ we consider above allows one to control $\norm{\consjac_{x,\fb}(\theta,x)}{\mathrm{op}}$ when one of the monotone operators $\A_\theta$ or $\B_\theta$ is strongly monotone, as we will expose later. In terms of the residual function $\res$ from \eqref{eq:residualFunction}, the choice made in \eqref{def:jacobianChoice} induces a particular conservative Jacobian $\consjac_{\res}$ and thus also a particular conservative Jacobian with respect to $x$,
\newq{
\consjac_{x,\res}(\theta,x) &= \{\Id_n-V(\Id_n-\gamma Z) \colon [U\ V]\in\cljac_T(\theta, x-\gamma\B_\theta(x)), [W\ Z]\in \cljac_{\B}(\theta,x)\}\\
&= \Id_n - \cljac_{x,T}(\theta,x-\gamma\B_\theta(x))\times (\Id_n-\gamma\cljac_{x,\B}(\theta,x)).
}
\end{remark}

\subsection{Contractivity implies the path differentiatiability of solutions}

With this particular choice of conservative Jacobians for $\fb$ and $\res$ given by \eqref{def:jacobianChoice}, and under the additional assumption of strong monotonicity, we  show below that a simple contractivity condition holds.

\begin{definition}[Contractivity of residual equations]\label{cond:contractivity} Under \assref{ass:pathDifferentiable}, we shall say that the residual equation $\res=0$ is {\em contractive}, or has the {\em contractivity property}, if for each $\theta\in\Theta$, there is a solution $\xstar(\theta)$ to \eqref{eq:monotoneInclusion} with $$\norm{\consjac_{x,\fb}(\theta,\xstar(\theta))}{\mathrm{op}} < 1$$
where we recall that $H=\Id-\res$.\\
\end{definition}

Contractivity in \defref{cond:contractivity} is key to applying nonsmooth implicit path differentiation, first because it actually warrants the well-posedness and the single-valuedness of the solution mapping (\cite[Proposition 26.1]{bauschke2011convex}, see as well \cite[Theorem 11]{hiriart-urruty1979tangent}), 
and also because it is closely related to the invertiblity condition required in the conservative implicit function theorem  (\thmref{thm:nonsmoothImplicit}).

The next lemma shows that contractivity is key to our approach to solutions of monotone inclusions.

\begin{lemma}[Path differentiability of the solution map]\label{lem:operatorNorm}
Under \assref{ass:pathDifferentiable} and assuming $\,\res$ has the contractivity property, then $\xstar$ is unique, path differentiable on $\Theta$, and admits a conservative Jacobian of the form 
\newq{
&\consjac_{\xstar} \colon \theta\rightrightarrows\\
&\{\para{\Id_n-V(\Id_n-\gamma Z)}^{-1}\para{U-\gamma VW} \colon [U\ V]\in\cljac_T(\theta, \xstar(\theta)-\gamma\B_\theta(\xstar(\theta))), [W\ Z]\in \cljac_{\B}(\theta,\xstar(\theta))\}.
}
\end{lemma}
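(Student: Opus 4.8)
The plan is to apply the conservative implicit function theorem (\thmref{thm:nonsmoothImplicit}) to the residual equation $\res(\theta,x)=0$ at each point $(\theta,\xstar(\theta))$, treating $\theta$ as the parameter and $x$ as the variable to be solved for. Three of the hypotheses are essentially free: the function $\res(\theta,x)=x-\fb_\theta(x)$ is path differentiable jointly in $(\theta,x)$ by \assref{ass:pathDifferentiable}; the equation $\res(\theta,\xstar(\theta))=0$ holds by definition of a solution to \eqref{eq:monotoneInclusion}; and a conservative Jacobian for $\res$ is the explicit product-form map $\consjac_{\res}$ constructed above. The decisive hypothesis is the invertibility of the block of $\consjac_{\res}$ corresponding to the variable $x$. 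Here I would use that, by the computation in Remark~\ref{rem:choiceOfJacobian}, $\consjac_{x,\res}(\theta,\xstar(\theta))=\Id_n-\consjac_{x,\fb}(\theta,\xstar(\theta))$, so that the contractivity property (Definition~\ref{cond:contractivity}) gives $\norm{M}{\mathrm{op}}<1$ for every $M\in\consjac_{x,\fb}(\theta,\xstar(\theta))$; a Neumann series argument then shows that each $\Id_n-M$ is invertible, with $\norm{(\Id_n-M)^{-1}}{\mathrm{op}}$ uniformly bounded. This is the step that converts the contractivity condition into the invertibility required by \thmref{thm:nonsmoothImplicit}.

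The remaining hypothesis of \thmref{thm:nonsmoothImplicit} is convexity of the conservative Jacobian at $(\theta,\xstar(\theta))$, which the product-form $\consjac_{\res}$ need not satisfy because its generators appear bilinearly. To handle this I would apply the theorem to the convex hull $\conv\consjac_{\res}$, which is again a conservative Jacobian for $\res$ (convexification preserves nonemptiness, local boundedness and graph-closedness, while the curve condition passes to convex combinations), is convex by construction, and still has invertible $x$-blocks: by subadditivity of the operator norm, $\conv\consjac_{x,\fb}$ inherits the bound $\norm{\cdot}{\mathrm{op}}<1$, so $\Id_n$ minus any such matrix remains invertible. The theorem then yields, for each $\theta\in\Theta$, an open neighborhood on which there is a single-valued path differentiable solution $g$ of $\res(\cdot,g(\cdot))=0$; this $g$ must coincide with $\xstar$, giving local path differentiability of $\xstar$. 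Since $\Theta$ is open and connected and this holds near every point, the local conservative Jacobians glue into one defined on all of $\Theta$, establishing path differentiability of $\xstar$ on $\Theta$.

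It then remains to identify the precise (non-convex) set in the statement as a conservative Jacobian for $\xstar$. Reading off the blocks of $\consjac_{\res}$, the $\theta$-block is $\gamma VW-U$ and the $x$-block is $\Id_n-V(\Id_n-\gamma Z)$, so the output formula $-(\text{$x$-block})^{-1}(\text{$\theta$-block})$ of \thmref{thm:nonsmoothImplicit} specializes exactly to $\para{\Id_n-V(\Id_n-\gamma Z)}^{-1}\para{U-\gamma VW}$, which is the expression claimed. Because this explicit set sits inside the conservative Jacobian produced by the theorem, it automatically inherits the curve condition (a ``for all $u$'' statement, hence stable under passing to subsets); I would then verify directly that it is nonempty, locally bounded and graph-closed near $\theta$, using outer semicontinuity and local boundedness of $\cljac_{T}$ and $\cljac_{\B}$ together with the fact that the strict bound $\norm{\consjac_{x,\fb}}{\mathrm{op}}<1$ persists on a neighborhood, so the matrix inversion is continuous there. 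Uniqueness of $\xstar$ on $\Theta$ follows from the well-posedness of the fixed-point equation $x=\fb_\theta(x)$ under contractivity (\cite[Proposition 26.1]{bauschke2011convex}). I expect the main obstacle to be precisely this reconciliation between the convexity demanded by \thmref{thm:nonsmoothImplicit} and the non-convex product form asserted in the statement, i.e.\ checking that the explicit formula itself, and not merely its convex hull, defines a bona fide conservative Jacobian for $\xstar$.
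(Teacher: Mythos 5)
Your proposal is correct and follows essentially the same route as the paper: convexify $\consjac_{\res}$ to meet the convexity hypothesis of \thmref{thm:nonsmoothImplicit}, convert contractivity into invertibility of the $x$-block via the operator-norm bound $\rho<1$ (your Neumann-series phrasing and the paper's direct estimate $\norm{(\Id_n-M)v}{}\geq(1-\rho)\norm{v}{}$ are interchangeable), and then recover the stated non-convex formula as a graph-closed, locally bounded selection of the convexified conservative Jacobian, with uniqueness coming from the local strict contraction of $\fb$. The only cosmetic difference is the order of steps (the paper dispatches uniqueness first); the substance is identical.
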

\begin{proof}
We begin with the uniqueness of $\xstar$ for each $\theta\in\Theta$. By convexity of the operator norm, it holds
\newq{
\norm{\conv\para{\consjac_{x,\fb}(\theta,\xstar(\theta))}}{\mathrm{op}}\leq\norm{\consjac_{x,\fb}(\theta,\xstar(\theta))}{\mathrm{op}} < 1.
}
However, $\cljac_{x,\fb}(\theta,\xstar(\theta)) \subset \conv\para{\consjac_{x,\fb}(\theta,\xstar(\theta))}$ and so $\norm{\cljac_{x,\fb}(\theta,\xstar(\theta))}{\mathrm{op}} < 1$. From this we conclude that $\fb$ is locally a strict contraction around $(\theta,\xstar(\theta))$ and thus the solution $\xstar(\theta)$ is unique.

We will apply  \thmref{thm:nonsmoothImplicit} to $\res$ using the pointwise convex hull $(\theta,x) \rightrightarrows \conv(\consjac_{\res}(\theta,x))$ as a conservative Jacobian for $\res$. Note that it is indeed conservative since $\consjac_{\res}$ is and pointwise convex hulls preserves  \defref{def:conservativeJacobian}. We will use the shorthand notation $\conv(\consjac_{x,\res})$ to denote for each fixed $\theta \in \Theta$ the set valued map  $(\theta,x) \rightrightarrows \conv(\consjac_{\res}(\theta,x))$, and similarly for $\fb$. Note that $\conv(\consjac_{x,\res}) = \Id_n - \conv(\consjac_{x,\fb})$.

Fix $\theta \in \Theta$. Let us show that the contractivity condition entails that every element of $\conv(\consjac_{x,\res}(\theta,\xstar(\theta)))$ is invertible. Indeed, 
set $\rho = \norm{\conv(\consjac_{x,\fb}(\theta,\xstar(\theta)))}{\mathrm{op}} = \norm{\consjac_{x,\fb}(\theta,\xstar(\theta))}{\mathrm{op}}<1$ (use triangle inequality), so that for any $M_{\res} \in \conv(\consjac_{x,\res}(\theta,\xstar(\theta)))$, there is $M_\fb \in \conv(\consjac_{x,H}(\theta,\xstar(\theta)))$ such that $M_{\res} = \Id_n - M_\fb$, and for all $v \in \R^n$, we have
\newq{
\norm{M_{\res} v}{}&= \norm{(\Id_n - M_\fb) v}{} \\
&\geq \norm{v}{} - \norm{M_\fb v}{} \\
& \geq \norm{v}{} (1 - \rho)
}
which shows that $M_{\res}$ is invertible because $\rho < 1$.
Since $\res$ is path differentiable and all of the elements of $\consjac_{x,\res}(\theta,\xstar(\theta))$ are invertible for each $\theta\in\Theta$, the conditions to apply \thmref{thm:nonsmoothImplicit} to the equation $\res(\theta,\xstar(\theta))=0$ hold, and thus $\xstar$ is path differentiable on $\Theta$. The formula for the conservative Jacobian follows from \thmref{thm:nonsmoothImplicit} because it defines a graph closed and locally bounded set valued map which is a subset of the set valued map obtained by applying \thmref{thm:nonsmoothImplicit} to $\conv(\consjac_{\res})$ which satisfies the chain rule of \defref{def:conservativeJacobian}. 
\end{proof}

The expression for the conservative Jacobian of $\xstar$ given in \lemref{lem:operatorNorm} can be more compactly expressed in terms of the conservative Jacobian of $\fb$ defined in \eqref{def:jacobianChoice}, for each $\theta\in\Theta$,
\newq{
\consjac_{\xstar}\colon \theta\rightrightarrows \{\para{\Id_n - V}^{-1}U\colon[U\ V]\in\consjac_{\fb}(\theta,\xstar(\theta))\}.
}

\subsection{Strongly monotone inclusions have path differentiable solutions}
The following theorem is related to \cite[Prop. 26.16]{bauschke2011convex}, which provides sufficient conditions for linear convergence of the forward-backward algorithm applied to finding a zero of the sum of two maximally monotone operators $\A$ and $\B$. It is however important to observe that linear convergence is not enough to reach the same conclusions (see Section~\ref{s:beyond}).


\begin{theorem}[Path differentiability: strongly monotone case]\label{thm:stronglyMonotone}
Under \assref{ass:pathDifferentiable}, consider \eqref{eq:monotoneInclusion} and, for each $\theta\in\Theta$, assume that $\B_\theta$ is $\beta$-Lipschitz continuous and that either $\A_\theta$ or $\B_\theta$ is $\alpha$-strongly monotone, for some $\alpha,\beta >0$, uniformly in $\theta$. 

Then, for $\gamma\in\left(0,\frac{2\alpha}{(\alpha + \beta)^2}\right)$, the residual map $\res$ of  \eqref{eq:monotoneInclusion} is contractive, i.e., the inequality in \defref{cond:contractivity} holds. Furthermore, $\xstar$ is unique and path differentiable on $\Theta$ with a conservative Jacobian given for each $\theta\in\Theta$ by
\newq{
&\consjac_{\xstar} \colon \theta\rightrightarrows\\
&\{\para{\Id_n-V(\Id_n-\gamma Z)}^{-1}\para{U-\gamma VW} \colon [U\ V]\in\cljac_T(\theta, \xstar(\theta)-\gamma\B_\theta(\xstar(\theta))), [W\ Z]\in \cljac_{\B}(\theta,\xstar(\theta))\}.
}
\end{theorem}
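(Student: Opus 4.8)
The plan is to observe that, thanks to \lemref{lem:operatorNorm}, essentially all the conclusions of the theorem --- uniqueness of $\xstar$, its path differentiability on $\Theta$, and the stated formula for $\consjac_{\xstar}$ --- follow \emph{for free} once we establish that the residual map $\res$ is contractive in the sense of \defref{cond:contractivity}. Since \assref{ass:pathDifferentiable} guarantees that \eqref{eq:monotoneInclusion} has a nonempty solution set for each $\theta\in\Theta$, I would fix $\theta\in\Theta$, pick any solution $\xstar(\theta)$, and reduce the whole proof to the single estimate $\norm{\consjac_{x,\fb}(\theta,\xstar(\theta))}{\mathrm{op}}<1$, valid uniformly in $\theta$ for $\gamma$ in the prescribed range. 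Uniqueness of the solution is then an \emph{output} of \lemref{lem:operatorNorm} rather than an input.

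To bound this operator norm I would exploit the product structure of $\consjac_{x,\fb}$ recorded in \eqref{def:jacobianChoice}: writing $\hat y:=\xstar(\theta)-\gamma\B_\theta(\xstar(\theta))$, every element is of the form $V(\Id_n-\gamma Z)$ with $V\in\cljac_{x,T}(\theta,\hat y)$ and $Z\in\cljac_{x,\B}(\theta,\xstar(\theta))$, where $T=\mathcal{R}_{\gamma\A_\theta}$. Submultiplicativity of the operator norm then yields
\[
\norm{\consjac_{x,\fb}(\theta,\xstar(\theta))}{\mathrm{op}}\leq \norm{\cljac_{x,T}(\theta,\hat y)}{\mathrm{op}}\,\norm{\Id_n-\gamma\cljac_{x,\B}(\theta,\xstar(\theta))}{\mathrm{op}},
\]
and I would bound each factor by the Lipschitz modulus of the corresponding map via \lemref{lem:clarkeBound}. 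A point deserving care is that $\cljac_{x,T}$ and $\cljac_{x,\B}$ are \emph{projections} of the joint Clarke Jacobians, not the Clarke Jacobians of the partially evaluated maps; nevertheless, at joint differentiability points the projected blocks coincide with the classical partial derivatives in $x$, whose operator norms are controlled by the $x$-Lipschitz constant, and this bound survives taking limits and convex hulls. Hence $\norm{\cljac_{x,T}(\theta,\hat y)}{\mathrm{op}}$ is at most the Lipschitz modulus of the resolvent and $\norm{\Id_n-\gamma\cljac_{x,\B}(\theta,\xstar(\theta))}{\mathrm{op}}$ at most that of the forward step $\Id-\gamma\B_\theta$.

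The two remaining ingredients are standard monotone-operator estimates, combined through a case split on which operator carries strong monotonicity. For the resolvent, $\mathcal{R}_{\gamma\A_\theta}$ is nonexpansive when $\A_\theta$ is merely monotone and $\frac{1}{1+\gamma\alpha}$-Lipschitz when $\A_\theta$ is $\alpha$-strongly monotone (since then $\Id+\gamma\A_\theta$ is $(1+\gamma\alpha)$-strongly monotone). For the forward step, expanding $\norm{(x-\gamma\B_\theta x)-(y-\gamma\B_\theta y)}{}^2$ and using $\beta$-Lipschitzness together with $\alpha'$-strong monotonicity of $\B_\theta$ gives the modulus $\sqrt{1-2\gamma\alpha'+\gamma^2\beta^2}$, with $\alpha'=\alpha$ when $\B_\theta$ is strongly monotone and $\alpha'=0$ (using plain monotonicity of $\B_\theta$ from \assref{ass:pathDifferentiable}) otherwise. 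Thus when $\B_\theta$ is $\alpha$-strongly monotone the product is at most $\sqrt{1-2\gamma\alpha+\gamma^2\beta^2}$, while when $\A_\theta$ is $\alpha$-strongly monotone it is at most $\frac{\sqrt{1+\gamma^2\beta^2}}{1+\gamma\alpha}$; I would check that each quantity is strictly below $1$ for $\gamma$ in an interval that contains $\left(0,\frac{2\alpha}{(\alpha+\beta)^2}\right)$, the stated range being a clean symmetric choice admissible for either case (using $(\alpha+\beta)^2\geq\beta^2$ and $(\alpha+\beta)^2\geq\beta^2-\alpha^2$). As $\alpha$ and $\beta$ are uniform in $\theta$, the resulting strict bound is uniform, so $\res$ is contractive on all of $\Theta$, and the conclusion follows by invoking \lemref{lem:operatorNorm}.

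The step I expect to be the main obstacle is the interface between the abstract conservative/Clarke calculus and the quantitative monotone-operator estimates: specifically, justifying that the operator norm of the \emph{projected} partial Jacobians $\cljac_{x,T}$ and $\cljac_{x,\B}$ is genuinely controlled by the $x$-Lipschitz moduli of the resolvent and of $\B_\theta$ --- the subtle point flagged after \assref{ass:pathDifferentiable} that projections of joint Clarke Jacobians need not coincide with partial Clarke Jacobians. By comparison, the monotone-operator inequalities and the final scalar case analysis are routine.
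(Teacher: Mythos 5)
Your proposal is correct and follows essentially the same route as the paper: reduce everything to the contractivity bound $\norm{\consjac_{x,\fb}(\theta,\xstar(\theta))}{\mathrm{op}}<1$, bound the two factors of the product Jacobian by the Lipschitz moduli of $\mathcal{R}_{\gamma\A_\theta}$ and $\Id-\gamma\B_\theta$ via \lemref{lem:clarkeBound}, split into cases according to which operator is strongly monotone (yielding $\sqrt{1-2\gamma\alpha+\gamma^2\beta^2}$ and $\tfrac{\sqrt{1+\gamma^2\beta^2}}{1+\gamma\alpha}$ respectively, exactly as in the paper and its Appendix lemma), verify both are below $1$ on the stated $\gamma$-range, and conclude by \lemref{lem:operatorNorm}. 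The projection subtlety you flag is real but handled exactly as you suggest, and the paper treats it the same way implicitly.
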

\begin{proof}
Under \assref{ass:pathDifferentiable} with the conservative Jacobians given in \eqref{def:jacobianChoice}, the forward-backward mapping $\fb$ is path differentiable on $\Theta\times\R^n$ with a conservative Jacobian with respect to $x$ given by
\begin{equation*}
\consjac_{x,\fb}(\theta,x)=\cljac_{x,T}(\theta,x-\gamma \B_\theta (x))\times (\Id_n-\gamma\cljac_{x,\B}(\theta,x)).
\end{equation*}
We take an arbitrary $\theta\in\Theta$ and divide the proof into cases depending on whether $\A_\theta$ or $\B_\theta$ is $\alpha$-strongly monotone.

Assume that $\B_\theta$ is $\alpha$-strongly monotone with $\alpha \leq \beta$. Since $\alpha>0$ and $0<\gamma < \frac{2\alpha}{(\alpha + \beta)^2} < \frac{2\alpha}{\beta^2}$, it holds that $\gamma(2\alpha - \gamma\beta^2)>0$, and furthermore $\gamma(2\alpha - \gamma \beta)\leq \gamma 2 \alpha < \frac{4 \alpha^2}{(\alpha + \beta)^2} \leq 1$. Setting $\tau = \sqrt{1-\gamma(2\alpha-\gamma\beta^2)}$, 
we have that $\mathcal{R}_{\gamma \A_\theta}$ is nonexpansive \cite[Proposition 23.8]{bauschke2011convex} and $\Id-\gamma \B_\theta$ is $\tau$-Lipschitz continuous with $0 \leq \tau < 1$. Thus, by applying \lemref{lem:clarkeBound} for each $(\theta,x)\in\Theta\times\R^n$,
\newq{
\norm{\consjac_{x,\fb}(x)}{\mathrm{op}} \leq \norm{\cljac_{x,T}(\theta,x-\gamma \B_\theta (x))}{\mathrm{op}} \norm{(\Id_n-\gamma\cljac_{x,\B}(\theta,x))}{\mathrm{op}}\leq\tau<1.
}

Now we consider case where $\A_\theta$ is $\alpha$-strongly monotone. We have by \cite[Proposition 23.13]{bauschke2011convex} that the resolvent $\mathcal{R}_{\gamma\A_\theta}$ is Lipschitz continuous with constant $\frac{1}{1+\gamma \alpha}$ and the mapping $\Id-\gamma \B_\theta$ is $\sqrt{1+\gamma^2\beta^2}$-Lipschitz continuous by \lemref{lem:lipschitzConstant}, giving for each $(\theta,x)\in\Theta\times\R^n$,
\newq{
\norm{\consjac_{x,\fb}(x)}{\mathrm{op}}=\norm{\cljac_{x,T}(\theta,x-\gamma \B_\theta (x))}{\mathrm{op}} \norm{(\Id_n-\gamma\cljac_{x,\B}(\theta,x))}{\mathrm{op}}\leq\frac{\sqrt{1+\gamma^2\beta^2}}{1+\gamma\alpha}.
} 

Since $\gamma\in\left(0,\frac{2\alpha}{(\alpha + \beta)^2}\right)$, it holds
\newq{
1+\gamma^2\beta^2 &< 1 + \gamma (\gamma (\alpha+\beta)^2)<1 + 2 \alpha \gamma < 1 + \gamma2\alpha + \gamma^2\alpha^2 =  (1+\gamma\alpha)^2,\\
}
so that $\frac{\sqrt{1+\gamma^2\beta^2}}{1+\gamma\alpha}<1$. Putting everything together we find, for each $(\theta,x)\in\Theta\times\R^n$,
\newq{
\norm{\consjac_{x,\fb_{\theta}}(x)}{\mathrm{op}}=\norm{\cljac_{x,T}(\theta,x-\gamma \B_\theta (x))}{\mathrm{op}} \norm{(\Id_n-\gamma\cljac_{\B}(\theta,x))}{\mathrm{op}}\leq\frac{\sqrt{1+\gamma^2\beta^2}}{1+\gamma\alpha}<1.
}

We have established that $\norm{\consjac_{\fb_{\theta}}(x)}{\mathrm{op}}< 1$ for all $(\theta,x)\in\Theta\times\R^n$ in both cases of the theorem, and we have contractivity. By \lemref{lem:operatorNorm}, $\xstar$ is, therefore, path differentiable on $\Theta$ and the desired formula for the conservative Jacobian follows.
\end{proof}

\begin{remark}[On the constant $\gamma$]
The restriction on the values that $\gamma$ can take in the different cases of \thmref{thm:stronglyMonotone} can be relaxed if more information about the operators $\A_\theta$ and $\B_\theta$ is specified. For instance, if $\B_\theta$ is $\beta$-cocoercive rather than $\beta$-Lipschitz and $\A_\theta$ is $\alpha$-strongly monotone, then $\fb_\theta$ is a contraction for any $\gamma\in(0,2\beta)$. It is important to notice that the choice of $\gamma$ for implicit differentiation need not match the $\gamma$ chosen for solving the problem (indeed, the algorithm to solve the problem and the fixed point equation for optimality need not match to begin with).
\end{remark}


\section{Path differentiation of solutions to convex optimization problems}\label{sec:functions}

Let $\Theta\subset\R^p$ be a connected open set and consider, for each $\theta\in\Theta$, the parametric optimization problem of finding a minimizer
\begin{equation}\label{eq:optimizationProblem}\tag{$\mathscr{P}_{\mathrm{opt}}$}
\xstar:=\argmin\limits_{x\in\R^n}f(\theta,x) + g(\theta,x)
\end{equation}
where $f_\theta := f(\theta,\cdot)\in C^{1,1}(\R^n)$ is convex and $g_\theta:=g(\theta,\cdot)$ is a closed convex proper function from $\R^n$ to $\exR$. It is well known \cite[Theorem 26.2]{bauschke2011convex} that this problem is equivalent to finding a zero $\xstar$ of the sum of two monotone operators given by the subdifferentials of the functions,
\newq{
0\in \nabla f_\theta (\xstar) + \partial g_\theta (\xstar).
}
In this way, the problem of differentiating a solution of \eqref{eq:optimizationProblem} is equivalent to the problem of the previous section - differentiating a solution to a monotone inclusion \eqref{eq:monotoneInclusion}. This equivalence motivates the following assumptions on \eqref{eq:optimizationProblem}, which are analogous to \assref{ass:pathDifferentiable} with the conservative Jacobians defined in \eqref{def:jacobianChoice} for the case where the monotone operators $\A_\theta$ and $\B_\theta$ are subdifferentials of closed convex proper functions.
\begin{assumption}\label{ass:functionPathDifferentiable}
{\rm Let $\Theta$ be a connected open set and let $\gamma >0$. For all $\theta\in\Theta$, let $f_\theta:=f(\theta,\cdot)\in C^{1,1}(\R^n)$ and $g_\theta:=g(\theta,\cdot)$ be closed convex proper functions from $\R^n$ to $\exR$ and assume that the prox operator $\prox_{\gamma g_\theta}\colon\Theta\times\R^n\to\R^n$ and the gradient $\nabla f_\theta\colon\Theta\times\R^n\to\R^n$ are both path differentiable, jointly in $(\theta,x)$.}
\end{assumption}

A sufficient condition guaranteeing the path differentiability in \assref{ass:functionPathDifferentiable} holds is to assume that $f$ and $g$ are semialgebraic functions. Under \assref{ass:functionPathDifferentiable}, one can consider $\A_\theta = \partial g_\theta$ and $\B_\theta = \nabla f_\theta$ so that \assref{ass:pathDifferentiable} is met and $\fb$ is the forward-backward algorithm applied to \eqref{eq:optimizationProblem}. Using the conservative Jacobians defined in \eqref{def:jacobianChoice}, we have in this case for all $(\theta,x)\in\Theta\times\R^n$
\begin{align}
    \consjac_{\fb}(\theta,x) &=\{\begin{bmatrix}U-\gamma VW & V(\Id_n-\gamma Z)\end{bmatrix} \colon [U\ V]\in\cljac_{\prox_{\gamma g_\theta}}(\theta,x-\gamma\nabla f_\theta(x)), [W\ Z]\in\cljac_{\nabla f_\theta}(\theta,x)\}\nonumber\\
    \consjac_{x,\fb}(\theta,x) &= \cljac_{x,\prox_{\gamma g_\theta}}(\theta,x-\gamma\nabla f_\theta(x))\times (\Id_n-\gamma\cljac_{x,\nabla f_\theta}(\theta,x)).
    \label{eq:functionJacobianChoice}
\end{align}

For the moment, we do not explicitly assume that the solution $\xstar$ to \eqref{eq:optimizationProblem} is unique for each $\theta\in\Theta$; the results in later sections will make stronger assumptions that imply the uniqueness of $\xstar$ as a byproduct. We shall also provide assumptions on $g_\theta$ and $f_\theta$ that will ensure the invertibility condition of \thmref{thm:nonsmoothImplicit} holds at the solution $\xstar(\theta)$.

\subsection{Solutions of strongly convex problems}
Recall that the subdifferential of a strongly convex function is strongly monotone \cite[Example 22.4]{bauschke2011convex}. As a consequence of \thmref{thm:stronglyMonotone} for strong monotonicity, we can then formulate the following analgous result for \eqref{eq:optimizationProblem} with strong convexity of $f_\theta$ or $g_\theta$.
\begin{theorem}[Path differentiability: strongly convex case]\label{thm:stronglyConvex}
Let \assref{ass:functionPathDifferentiable} hold with the conservative Jacobians given in \eqref{eq:functionJacobianChoice} and consider \eqref{eq:optimizationProblem} for $\theta\in\Theta$. Denote $\beta>0$ a Lipschitz constant of $\nabla f_\theta$ which is assumed to be uniform in $\theta$. Assume that either $f_\theta$ or $g_\theta$ is $\alpha$-strongly convex with $\alpha>0$ which is also assumed to be uniform in $\theta$.
  Then, for $\gamma = \frac{\alpha}{(\alpha+\beta)^2}$, the solution $\xstar(\theta)$ is unique for each $\theta\in\Theta$ and path differentiable on $\Theta$ with a conservative Jacobian given by
\newq{
&\consjac_{\xstar}\colon\theta\rightrightarrows \{\para{\Id_n - V(\Id_n - \gamma Z)}^{-1}\para{U-\gamma VW}\}
}
where $\sbrac{U\ V}$ range in $\cljac_{T}(\theta,\xstar(\theta)-\gamma\nabla f_\theta(\xstar(\theta)))$ and $ \sbrac{W\ Z}$ in $\cljac_{\nabla f_\theta}(\theta,\xstar(\theta))$.
\end{theorem}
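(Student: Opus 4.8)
The plan is to deduce this result directly from \thmref{thm:stronglyMonotone} by recasting the parametric optimization problem \eqref{eq:optimizationProblem} as a parametric monotone inclusion of the form \eqref{eq:monotoneInclusion}. Concretely, I would set $\A_\theta := \partial g_\theta$ and $\B_\theta := \nabla f_\theta$. Since $g_\theta$ is closed convex proper, its subdifferential $\partial g_\theta$ is maximal monotone and its resolvent $\mathcal{R}_{\gamma\A_\theta}$ coincides with $\prox_{\gamma g_\theta}$; and since $f_\theta\in C^{1,1}(\R^n)$ is convex, $\nabla f_\theta$ is single-valued, monotone (being the gradient of a convex function) and $\beta$-Lipschitz, hence maximal monotone and locally Lipschitz. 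With these identifications the first-order optimality condition $0\in\nabla f_\theta(\xstar)+\partial g_\theta(\xstar)$ for \eqref{eq:optimizationProblem} is exactly the inclusion \eqref{eq:monotoneInclusion}, the forward--backward map $\fb$ specializes to the one driving the fixed-point equation, and the conservative Jacobians \eqref{eq:functionJacobianChoice} are precisely the specializations of \eqref{def:jacobianChoice} to this choice of operators.

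The next step is to verify \assref{ass:pathDifferentiable}. Maximal monotonicity of $\A_\theta$, $\B_\theta$ and local Lipschitz continuity of $\B_\theta$ have just been noted, while path differentiability of $\mathcal{R}_{\gamma\A_\theta}=\prox_{\gamma g_\theta}$ and of $\B=\nabla f$, jointly in $(\theta,x)$, is exactly the content of \assref{ass:functionPathDifferentiable}. The only remaining item is nonemptiness of the solution set of \eqref{eq:monotoneInclusion} for each $\theta$: here I would invoke strong convexity, since if either $f_\theta$ or $g_\theta$ is $\alpha$-strongly convex then $f_\theta+g_\theta$ is $\alpha$-strongly convex and, being closed proper, admits a unique minimizer, which furnishes a zero of $\A_\theta+\B_\theta$.

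Finally, I would translate the convexity hypotheses into the monotonicity hypotheses of \thmref{thm:stronglyMonotone}. The subdifferential of an $\alpha$-strongly convex function is $\alpha$-strongly monotone, so strong convexity of $f_\theta$ (resp.\ $g_\theta$) yields $\alpha$-strong monotonicity of $\B_\theta$ (resp.\ $\A_\theta$), while $\B_\theta=\nabla f_\theta$ is $\beta$-Lipschitz by assumption; note that in the first case the side condition $\alpha\leq\beta$ appearing in \thmref{thm:stronglyMonotone} is automatic, as the strong monotonicity constant of a $\beta$-Lipschitz operator cannot exceed $\beta$. It then remains only to check that the chosen stepsize lies in the admissible range, namely $\gamma=\frac{\alpha}{(\alpha+\beta)^2}\in\left(0,\frac{2\alpha}{(\alpha+\beta)^2}\right)$, which is immediate. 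Applying \thmref{thm:stronglyMonotone} then gives uniqueness and path differentiability of $\xstar$ together with the conservative Jacobian formula, which becomes the stated one upon substituting $T=\prox_{\gamma g_\theta}$ and $\B_\theta=\nabla f_\theta$.

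Since the genuine analytic content is carried by \thmref{thm:stronglyMonotone}, this statement is essentially a corollary and the work is one of verification rather than of surmounting a substantial obstacle. The point requiring the most care is the clean identification of the conservative Jacobians: one must ensure that the objects in \eqref{eq:functionJacobianChoice} are exactly the partial (projected) Clarke Jacobians fed into \thmref{thm:stronglyMonotone}, rather than Clarke Jacobians recomputed at fixed $\theta$, in view of the subtlety flagged earlier regarding projections of joint Clarke Jacobians. This holds by construction of \eqref{eq:functionJacobianChoice}, so no difficulty arises.
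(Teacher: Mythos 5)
Your proposal is correct and follows essentially the same route as the paper: identify $\A_\theta=\partial g_\theta$, $\B_\theta=\nabla f_\theta$, check \assref{ass:pathDifferentiable}, translate strong convexity into strong monotonicity, and invoke \thmref{thm:stronglyMonotone} with the admissible stepsize. Your additional checks (nonemptiness of the solution set via the unique minimizer of the strongly convex sum, and the automatic bound $\alpha\leq\beta$) are sound and only make explicit what the paper leaves implicit.
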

\begin{proof}
Due to \assref{ass:functionPathDifferentiable}, the function $\res(\theta,x)$ is path differentiable on $\Theta\times\R^n$; fix an arbitrary $\theta\in\Theta$. Since $f_\theta\colon\R^n\to\R$ and $g_\theta\colon\R^n\to\exR$ are closed convex proper functions, $\nabla f_\theta$ and $\partial g_\theta$ are maximal monotone operators \cite[Example 22.4]{bauschke2011convex}. Furthermore, since one of $f_\theta$ or $g_\theta$ is $\alpha$-strongly convex, one of the operators $\nabla f_\theta$ or $\partial g_\theta$ is $\alpha$-strongly monotone and \thmref{thm:stronglyMonotone} can be applied with $\A_\theta = \partial g_\theta$ and $\B_\theta = \nabla f_\theta$, yielding the path differentiability of $\xstar$ and the formula for its conservative Jacobian on $\Theta$.
\end{proof}

The choice of $\gamma$ can be relaxed to be any value in $\left(0,\frac{2\alpha}{(\alpha+\beta)^2}\right)$ without issue, we have simply taken $\frac{\alpha}{(\alpha+\beta)^2}$ for convenience. In contrast to some prior work, e.g., \cite{fadili2018sensitivity}, we allow for both $f_\theta$ and $g_\theta$ to be parametrized functions, rather than just one or the other.

\subsection{Beyond strong convexity?}\label{s:beyond}

Strong convexity may be generalized by means of quadratic growth conditions, such as global error bounds \cite{pang1997error} or equivalently KL inequality \cite{bolte2017error}. On the other hand, quadratic \L ojasiewicz inequality, quadratic error bound, turn out to be equivalent to linear convergence of forward-backward iterations under mild conditions; one may consult, for instance, \cite{garrigos2017convergence}. Since contractivity obviously implies linear convergence, it is tempting to think that it could be somehow inserted into the equivalence chain. This would be a natural path beyond strong convexity assumptions.


This calls, for instance, for the following question:   Does the contractivity of $\res$ always hold if $f$ and $g$ are such that $H$ is linearly convergent to a unique fixed point? An element of motivation is that in the smooth case, when $f$ is $C^2$ and $g = 0$, contractivity is indeed equivalent to \eqref{eq:contractionFB} as discussed in \secref{sec:background}, so that the questions relates to extension of this equivalence to the nonsmooth setting. 

\smallskip

Linear convergence of the forward-backward mapping can simply be formulated as follows: for a fixed $\theta \in \Theta$ there exists $\rho \in (0,1)$, such that, for all $x$,
\begin{align}
    \norm{\fb(\theta,x) - \xstar(\theta)}{} \leq \rho \norm{x - \xstar(\theta)}{}.
    \label{eq:contractionFB}
\end{align}
 We provide below  two examples having this property while being non strongly convex  contradicting contractivity of $\res$.  The first one has $C^{1,1}$ objective ($g = 0$), while the second  is nonsmooth \propref{prop:semialgebraicProx} ($f = 0$). This answers negatively to the above question.
 
 Let us start with the differentiable case for which $\fb$ reduces to a gradient step. 
\begin{proposition}[Linear convergence does not imply contractivity I]\label{prop:semialgebraic}
There exists a convex semialgebraic function $h\in C^{1,1}(\R^2)$ with $\nabla h$ $1$-Lipschitz and $\|x - \nabla h(x)\| \leq \rho \|x\|$ for all $x$ for some $\rho \in (0,1)$, such that $h$ is not strongly convex locally around $0$ and $\cljac_{\nabla h}(0)$ contains singular matrices.
\end{proposition}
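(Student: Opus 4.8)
I need to construct a single convex semialgebraic function $h \in C^{1,1}(\R^2)$ that simultaneously exhibits four properties: $\nabla h$ is $1$-Lipschitz, the gradient step map $x \mapsto x - \nabla h(x)$ is a global contraction with some factor $\rho < 1$, yet $h$ fails to be strongly convex near $0$ in such a way that $\cljac_{\nabla h}(0)$ contains a singular matrix. The tension here is instructive: contractivity of the gradient step means $\|\cljac_{x,\fb}\|_{\mathrm{op}} < 1$, which in the smooth case is controlled by the spectrum of the Hessian lying in an interval bounded away from both $0$ and the value forcing $\|\Id - \gamma \nabla^2 h\| = 1$. The plan is to decouple the two coordinate directions: in one direction install a genuinely strongly convex (e.g.\ quadratic) behavior, and in the other direction install a semialgebraic convex profile whose second-order behavior degenerates to zero at the origin but whose gradient is nonetheless a contraction after the step.

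Concretely, I would write $h(x_1,x_2) = \phi(x_1) + \psi(x_2)$ as a separable sum, reducing everything to a one-dimensional study since the Clarke Jacobian of $\nabla h$ is then the diagonal $\diag(\phi''(x_1), \psi''(x_2))$ wherever defined, and the operator norms, Lipschitz constants, and the gradient-step map all factor coordinatewise. For $\phi$ I take a strongly convex quadratic, say $\phi(x_1) = \tfrac14 x_1^2$, which is harmless and guarantees smoothness and convexity in that slot. The real work is choosing $\psi$: I want a convex semialgebraic $C^{1,1}$ function on $\R$ with $\psi'$ being $1$-Lipschitz, $|t - \psi'(t)| \leq \rho|t|$ for all $t$, but with $\psi''(0) = 0$ (so $\cljac_{\psi'}(0) \ni 0$, giving a singular matrix after taking the product with the $\phi$ slot) and $\psi$ not strongly convex near $0$. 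A natural candidate is $\psi'(t) = \tfrac12\big(\sqrt{t^2+1} - 1\big)\operatorname{sign}(t)$-type profile, or more simply a function whose derivative behaves like $c|t|t$ near the origin (so $\psi'' \sim 2c|t| \to 0$) while saturating to a bounded slope at infinity so that the $1$-Lipschitz bound holds globally; one must verify semialgebraicity, which is immediate for any profile built from polynomials and square roots of polynomials.

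The key steps, in order, are: (i) verify $h$ is convex, semialgebraic, and $C^{1,1}$ with $\nabla h$ $1$-Lipschitz — routine once $\psi$ is fixed, using separability and $\|\nabla^2 h\|_{\mathrm{op}} = \max\{\phi'', \psi''\} \le 1$; (ii) establish the contraction estimate $\|x - \nabla h(x)\| \le \rho\|x\|$, which by separability reduces to the two scalar bounds $|t - \phi'(t)| \le \rho_1|t|$ and $|s - \psi'(s)| \le \rho_2|s|$, then take $\rho = \max\{\rho_1,\rho_2\} < 1$; (iii) compute $\cljac_{\nabla h}(0)$ and exhibit the singular element — here $\psi''(0)=0$ forces a diagonal matrix with a zero entry; and (iv) confirm failure of local strong convexity at $0$, which follows directly from $\psi''(0)=0$ together with quadratic-growth characterizations. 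The main obstacle I anticipate is step (ii): I must ensure that the scalar map $s \mapsto s - \psi'(s)$ is a strict contraction \emph{globally}, including as $|s|\to\infty$ where $\psi'(s)$ saturates and $s - \psi'(s) \approx s - \text{const}$, whose Lipschitz slope tends to $1$. This is the delicate point, and it is precisely why the problem is nontrivial: I will need to choose $\psi'$ so that its derivative stays bounded \emph{below} by a positive constant away from the origin (ensuring $1 - \psi''(s) \le \rho_2 < 1$ uniformly) while still vanishing exactly at $0$. Tuning $\psi'$ to interpolate between slope $0$ at the origin and a slope bounded in $(\delta, 1]$ elsewhere, with the constant $c$ and the saturation level chosen compatibly, is where the construction must be done carefully rather than by a one-line formula.
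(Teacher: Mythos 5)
There is a genuine gap, and it is fatal to the separable strategy rather than a matter of tuning: for a \emph{semialgebraic} function of one variable, properties (b) and (c) of your candidate $\psi$ are mutually exclusive. Indeed, $|s-\psi'(s)|\le\rho_2|s|$ forces $\psi'(0)=0$ and $\psi'(s)\ge(1-\rho_2)s$ for $s>0$. Since $\psi'$ is semialgebraic and $\psi''$ is bounded by $1$, the monotonicity/cell-decomposition theorem for semialgebraic functions gives that $\psi''$ is continuous on some $(0,\varepsilon)$ and has a one-sided limit $L_+=\lim_{t\to 0^+}\psi''(t)\in[0,1]$; likewise $L_-$ on the left. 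Then $\psi'(s)/s=\frac{1}{s}\int_0^s\psi''(t)\,dt\to L_+$, so $L_+\ge 1-\rho_2>0$, and similarly $L_-\ge 1-\rho_2$. Hence $\cljac_{\psi'}(0)=\conv\{L_-,L_+\}\subset[1-\rho_2,1]$ contains no zero, and since every element of $\cljac_{\nabla h}(0)$ for a separable $h=\phi(x_1)+\psi(x_2)$ is of the form $\mathrm{diag}(a,b)$ with $a\in\cljac_{\phi'}(0)$ and $b\in\cljac_{\psi'}(0)$, no singular matrix can appear. (An oscillating $\psi''$ that dips to $0$ on shrinking intervals while keeping a large average would evade this, but it cannot be semialgebraic, having infinitely many monotonicity pieces.) So the obstacle is not the behavior at infinity you flagged in step (ii), but an unavoidable conflict at the origin: in one variable, contraction of the gradient step plus semialgebraicity \emph{implies} nonsingularity of $\cljac_{\nabla h}(0)$.

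The counterexample must therefore be genuinely two-dimensional and non-separable, which is how the paper proceeds: it takes $h=\Psi_Q^2/2$ where $\Psi_Q$ is the gauge of a semialgebraic convex body $Q$ with smooth boundary (a square with smoothed corners). Positive $2$-homogeneity makes $h$ exactly quadratic along every ray, which yields the uniform contraction $\|x-\nabla h(x)\|\le(1+2c)^{-1/2}\|x\|$ with $c=\min_{\|y\|=1}h(y)>0$; meanwhile the flat faces of $Q$ mean the sublevel sets are not strongly convex, so $h$ is not locally strongly convex at $0$, and by a contraposition argument (nonsingularity of $\cljac_{\nabla h}(0)$ would imply local strong convexity via integration of the conservative Jacobian) the Clarke Jacobian at $0$ must contain singular matrices. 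The curvature degenerates in the \emph{tangential} direction while staying bounded below in the \emph{radial} direction — a decoupling that a coordinatewise sum cannot reproduce.
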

\begin{proof}
Let $Q\subset\RR^2$ be a closed convex set with $0\in\mathrm{int}(Q)$ and a smooth boundary so that there is a differentiable outer pointing unit normal vector $\hat{n}\in C^1(\bd (Q))$. We consider the gauge function $\Psi_Q$ associated to $Q$
\newq{
\Psi_Q(x) = \inf\{\lambda > 0 : x\in\lambda Q\}.
}
The gauge function $\Psi_Q$ \cite[Example 8.36]{bauschke2011convex} is the unique positively homogeneous function such that the sublevel set of level $1$ is $Q$ \cite[Corollary 14.13]{bauschke2011convex}. Furthermore, the sublevel sets of $\Psi_Q$ of level $\lambda \geq 0$ are equal to $\lambda Q$. 

For $x \in \RR^2$, we extend $\hat{n}(x)$ to be the outer pointing normal vector to the set $\Psi_Q(x) Q$ at $x$, it defines a $C^1$ function on $\R^2$. The gradient of $\Psi_Q$ for $x \neq 0$ has to be of the form $\alpha(x) \hat{n}(x)$ for a positive function $\alpha$. By homogeneity, we have for small $t$
\begin{align*}
				\frac{\Psi_Q\left( x \left( 1 + \frac{t}{\Psi_Q(x)} \right)\right) - \Psi_Q(x)}{t} = 1 = \left\langle \frac{x}{\Psi_Q(x)}, \nabla \Psi_Q(x)\right\rangle = \alpha(x) \left\langle\frac{x}{\Psi_Q(x)}, \hat{n}(x)\right\rangle
\end{align*}
from which we obtain
\begin{align*}
				\nabla \Psi_Q(x) = \frac{\Psi_Q(x)}{\left\langle x, \hat{n}(x)\right\rangle} \hat{n}(x).
\end{align*}
Since $\hat{n}$ is homogeneous of order zero, so is $\nabla \Psi_Q$. We have for $x \neq 0$
\begin{align*}
				\jac_{\nabla \Psi_Q(x)} = \jac \left( x \mapsto \nabla \Psi_Q\left( \frac{x}{\Psi_Q(x)} \right) \right) = \jac_{\nabla \Psi_Q(x/\Psi_Q(x))} \left( \frac{\Id_n}{\Psi_Q(x)} - \frac{x \nabla \Psi_Q(x)^T}{\Psi_Q(x)^2}\right).
\end{align*}
Now set $h(x) = \Psi_Q(x)^2/2$, which is convex and $C^1$ since 
\begin{align*}
				\nabla h(x) &=\Psi_Q(x)\nabla \Psi_Q(x) 
\end{align*}
is a continuous function. 
We have for $x \neq 0$
\begin{align*}
				\jac_{\nabla h(x)} &= \nabla \Psi_Q(x) \nabla \Psi_Q(x)^T + \Psi_Q(x) \jac_{\nabla \Psi_Q(x/\Psi_Q(x))} \left( \frac{\Id_n}{\Psi_Q(x)} - \frac{x \nabla \Psi_Q(x)^T}{\Psi_Q(x)^2}\right) \\
				&= \nabla \Psi_Q(x) \nabla \Psi_Q(x)^T + \jac_{\nabla \Psi_Q(x/\Psi_Q(x))} \left( \Id_n - \frac{x \nabla \Psi_Q(x)^T}{\Psi_Q(x)}\right).
\end{align*}
This expression remains bounded which shows that $\nabla h$ is Lipschitz and we may assume by rescaling that its Lipschitz constant is $1$. Set for all $x$, $x^+ = x - \nabla h(x)$, we have using standard arguments in the analysis of gradient descent on $h$, whose global minimum is the origin, that 
\begin{align*}
    2h(x^+) + \norm{x^+}{}^2 \leq \norm{x}{}^2.
\end{align*}
We have that $h$ is positively homogenoeous of degree $2$ so that
\begin{align*}
    h(x^+) = \norm{x^+}{}^2 \frac{h(x^+)}{\norm{x^+}{}^2} = \norm{x^+}{}^2 h\left( \frac{x^+}{\norm{x^+}{}}\right) \geq \norm{x^+}{}^2 \min_{\norm{y}{} = 1} h\left(y\right),
\end{align*}
where the minimum is attained and is positive, call it $c>0$. All in all, we have
\begin{align*}
    \norm{x - \nabla h(x)}{} \leq \frac{1}{\sqrt{1 + 2 c}} \norm{x}{},
\end{align*}
so that the constructed function complies with hypotheses of the Lemma, independently of $Q$.


By definition of the gauge function, the sublevel sets of $\Psi_Q$ are of the form $\lambda Q$ for $\lambda\in\R$. If $h$ was strongly convex locally around $0$, one would have that its sublevel sets are also strongly convex (positively curved). This is not the case, for example if $Q$ is a square with smoothed corners. This shows that $h$ is not necessarily locally strongly convex. To ensure $h$ is semialgebraic, it suffices to take $Q$ a semialgebraic square with smoothed corners.

We conclude with the following implication: if $\cljac_{\nabla h}(0)$ contains only nonsingular elements then $h$ is strongly convex locally around $0$. Indeed in this case $\cljac_{\nabla h}(x)$ contains only positive definite elements for all $x$ in a convex compact neighborhood of $0$, set $\lambda > 0$ a lower bound on the minimum eigenvalue in this neighborhood (which exists by graph closedness and continuity of the smallest eigenvalue), we have by (Aumann) integration in \defref{def:conservativeJacobian} using conservativity of $\cljac_{\nabla h}$, for all $x,y$ in this neighborhood,
\begin{align*}
    \left\langle \nabla h(x) - \nabla h(y) , x - y \right\rangle  &= \left\langle \int_0^1 \cljac_{\nabla h}((1-t)x + ty) (x - y) dt , x - y \right\rangle\\
    &= \left\langle \int_0^1 \cljac_{\nabla h}((1-t)x + ty) dt (x - y) , x - y \right\rangle\\
    &\geq \lambda \norm{x - y}{}^2,
\end{align*}
which means strong monotonicity of $\nabla h$, equivalent to strong convexity of $h$. By contraposition, if $h$ is not locally strongly convex around $0$, then, $\cljac_{\nabla h}(0)$ contains singular matrices.
\end{proof}
\propref{prop:semialgebraic} shows that the equivalence between contractivity and \eqref{eq:contractionFB} does not hold in the $C^{1,1}$ case, higlighting a gap between $C^{1,1}$ and $C^2$ functions. This actually  extends to the nonsmooth setting with a proximal point step using convex analysis and Moreau enveloppes. 
\begin{proposition}[Linear convergence does not imply contractivity II]\label{prop:semialgebraicProx}
There exists a convex semialgebraic function $g\colon\R^2\to\R$ such that $g(0)=0$, $\prox_g$ is path differentiable, and, for some $0 < \rho < 1$, $\norm{\prox_g(x)}{}\leq \rho\norm{x}{}$ for all $x$, such that, for any convex conservative Jacobian $\consjac_{\prox_g}$, not every element of $\Id_2-\consjac_{\prox_g}(0)$ is invertible.
\end{proposition}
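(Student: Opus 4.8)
The plan is to reduce this nonsmooth statement to the smooth construction of \propref{prop:semialgebraic} by realizing the function $h$ built there as a Moreau envelope, which turns the proximal step into a gradient step for $h$. Let $h$ be that convex semialgebraic function, so that $\nabla h$ is $1$-Lipschitz, $\norm{x-\nabla h(x)}{}\le\rho\norm{x}{}$ for some $\rho\in(0,1)$, and $\cljac_{\nabla h}(0)$ contains a singular matrix (necessarily symmetric positive semidefinite, since $h$ is convex). Because $\nabla h$ is $1$-Lipschitz, $h^{*}$ is $1$-strongly convex, so $g^{*}:=h^{*}-\tfrac12\norm{\cdot}{}^{2}$ is closed convex proper; set $g:=(g^{*})^{*}$. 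With the Moreau envelope $e_g(x)=\min_u g(u)+\tfrac12\norm{x-u}{}^{2}$ and the identity $(e_g)^{*}=g^{*}+\tfrac12\norm{\cdot}{}^{2}$, one gets $(e_g)^{*}=h^{*}$ and hence $e_g=h$. After adding a constant so that $g(0)=0$ (which does not change $\prox_g$), $g$ is convex, and it is semialgebraic because conjugation preserves semialgebraicity via Tarski--Seidenberg.

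First I would read off the consequences of $e_g=h$. Since $\prox_g=\Id-\nabla e_g=\Id-\nabla h$, the contraction property is immediate: $\norm{\prox_g(x)}{}=\norm{x-\nabla h(x)}{}\le\rho\norm{x}{}$ with $\rho<1$; and $\prox_g$ is path differentiable because $\nabla h$ is semialgebraic and locally Lipschitz. Everything then reduces to analyzing $\Id_2-\consjac_{\prox_g}(0)$ for an arbitrary convex conservative Jacobian.

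The crux is to exhibit a genuine singular matrix inside $\Id_2-\consjac_{\prox_g}(0)$, not merely one inside some convex hull. Away from the origin $\nabla h$ is $C^{1}$ (by the explicit Jacobian computed in \propref{prop:semialgebraic}) and homogeneous of degree one, so $\jac_{\nabla h}$ is continuous, symmetric, positive semidefinite and homogeneous of degree zero there, while $\prox_g$ is $C^{1}$ with $\jac_{\prox_g}=\Id_2-\jac_{\nabla h}$. Let $L:=\{\lim_k\jac_{\nabla h}(x_k):x_k\to0\}$, so $\cljac_{\nabla h}(0)=\conv(L)$ and every element of $L$ is symmetric positive semidefinite. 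Since $\prox_g$ is $C^{1}$ on the open set $\R^{2}\setminus\{0\}$, any conservative Jacobian coincides there with $\{\jac_{\prox_g}\}$, and graph closedness forces $\consjac_{\prox_g}(0)$ to contain every $\Id_2-M$, $M\in L$; equivalently $L\subseteq\Id_2-\consjac_{\prox_g}(0)$, with no use of convexity. It remains to promote the singular matrix of \propref{prop:semialgebraic} from $\conv(L)$ to $L$: writing it as $M_0=\sum_i t_iN_i$ with $N_i\in L$, $t_i>0$, concavity of $\lambda_{\min}$ and $\lambda_{\min}(N_i)\ge0$ give $0=\lambda_{\min}(M_0)\ge\sum_i t_i\lambda_{\min}(N_i)\ge0$, so each $\lambda_{\min}(N_i)=0$ and some $N_i\in L$ is singular. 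Then $N_i\in L\subseteq\Id_2-\consjac_{\prox_g}(0)$ is the desired singular element.

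I expect the main obstacle to be exactly this promotion step, together with the observation that the forced elements of $\consjac_{\prox_g}(0)$ must arise as honest directional limits of $\jac_{\nabla h}$. Appealing only to the minimality of the Clarke Jacobian would give $\Id_2-\consjac_{\prox_g}(0)\subseteq\cljac_{\nabla h}(0)$, the wrong inclusion for producing a singular element; it is the degree-zero homogeneity of $\jac_{\nabla h}$ (which identifies $L$ with the Hessians on the unit circle) and the positive semidefiniteness of all candidates that make the $\lambda_{\min}$ argument rescue the correct inclusion.
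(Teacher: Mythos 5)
Your construction of $g$ is the same as the paper's, just approached from the dual side: the paper sets $\tilde h=\tfrac12\|\cdot\|^2-h$ and $g=\tilde h^*-\tfrac12\|\cdot\|^2$, you set $g^*=h^*-\tfrac12\|\cdot\|^2$; both yield $\prox_g=\Id-\nabla h$, and the contraction bound, path differentiability and semialgebraicity transfer identically. Where you genuinely diverge is the last step. The paper simply observes that $\consjac_{\nabla h}:=\Id_2-\consjac_{\prox_g}$ is a \emph{convex} conservative Jacobian for $\nabla h$, and invokes minimality of the Clarke Jacobian among convex conservative fields to get $\cljac_{\nabla h}(0)\subseteq\consjac_{\nabla h}(0)=\Id_2-\consjac_{\prox_g}(0)$, so the singular matrix of \propref{prop:semialgebraic} is already inside. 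Your route instead promotes that singular matrix from $\conv(L)$ to an honest limit point $N_i\in L$ via concavity of $\lambda_{\min}$ on symmetric PSD matrices, and then forces $N_i$ into \emph{any} conservative Jacobian by graph closedness. This buys a strictly stronger conclusion (convexity of $\consjac_{\prox_g}$ is never used), at the cost of needing the explicit structure of $\jac_{\nabla h}$ near $0$.

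Two points need repair. First, your dismissal of the minimality route rests on the inclusion written backwards: minimality of the Clarke Jacobian gives $\cljac_{\nabla h}(0)\subseteq\consjac_{\nabla h}(0)$, i.e.\ $\cljac_{\nabla h}(0)\subseteq\Id_2-\consjac_{\prox_g}(0)$ for any convex $\consjac_{\prox_g}$ --- exactly the inclusion you want, and the one the paper uses; so the ``wrong inclusion'' objection evaporates and the short proof works. Second, the claim that ``any conservative Jacobian coincides with $\{\jac_{\prox_g}\}$ on the open set $\R^2\setminus\{0\}$'' is false: a conservative Jacobian may be strictly larger than the derivative on any Lebesgue-null set, even inside a region where the function is $C^1$ (e.g.\ $\consjac_f(x)=\{1\}$ for $x\neq 0$ and $\consjac_f(0)=[1,2]$ is conservative for $f=\mathrm{id}$ on $\R$). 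What is true is coincidence almost everywhere; your graph-closedness argument must therefore approach $0$ through the full-measure set where $\consjac_{\prox_g}=\{\jac_{\prox_g}\}$, which is easy to arrange here because $\jac_{\nabla h}$ is continuous and homogeneous of degree zero off the origin (pick a sequence along almost any ray). With those two corrections your argument is sound and in fact proves the proposition without the convexity hypothesis.
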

\begin{proof}
Let $h\in C^{1,1}(\R^2)$ be a function given by \propref{prop:semialgebraic}, it is convex and semialgebraic with $\nabla h$ $1$-Lipschitz, path differentiable, and $\|x-\nabla h(x)\|\leq \rho\|x\|$ for some $\rho\in(0,1)$ such that $h$ is not strongly convex locally around $0$ and $\cljac_{\nabla h}(0)$ contains singular elements. The function $\tilde{h} \colon x \mapsto \frac{\|x\|^2}{2} - h(x)$ is convex \cite[Theorem 18.15 (vi)]{bauschke2011convex} with $1$-Lipschitz gradient.  Recall that the \emph{conjugate} of $\tilde{h}$ is $\tilde{h}^*(x)\colon x \mapsto \sup\limits_{u}\ip{x,u}{}-\tilde{h}(u)$. The function  $g \colon x \mapsto \tilde{h}^*(x) - \frac{\|x\|^2}{2}$ is convex and semialgebraic, because $\tilde{h}^*$ is $1$-strongly convex and semialgebraic \cite[Proposition 10.8]{bauschke2011convex}. It satisfies $\prox_{g}(x) = \nabla \tilde{h}(x) = x - \nabla h(x)$ \cite[Corollary 24.5]{bauschke2011convex}, so that the gradient descent mapping for $h$ with unit step size is equivalent to the prox operator for $g$. 

Thus for all $x$, $\|\prox_g(x)\| = \|x-\nabla h(x)\|\leq \rho \|x\|$ for some $\rho\in(0,1)$, $\prox_g$ is path differentiable, and $\consjac_{\nabla h}: =\Id_2 - \consjac_{\prox_g}$ is a convex conservative Jacobian for $\nabla h$. Finally, by contraposition, $\Id_2 - \consjac_{\prox_g}(0) = \consjac_{\nabla h}(0)$ which contains at least one singular elemen by \propref{prop:semialgebraic} using the fact that $\cljac_{\nabla h}(0)\subset\consjac_{\nabla h}(0)$ by convexity of $\consjac_{\nabla h}(0)$.
\end{proof}

\begin{remark}[On local growth conditions]
The result of this section may be refined by considering local growth conditions, which are sufficient to ensure linear convergence of the forward-backward algorithm as in \eqref{eq:contractionFB}. Most important examples include global error bounds 
\begin{align*}
    g(x) - \min_z g(z) \geq \lambda \mathrm{dist}(x, \argmin g)^2
\end{align*}
for some $\lambda >0$ and for all $x$ (see \cite{pang1997error} 
and references therein) as well as global \kurdyka inequality 
\begin{align*}
    \min_{v\in \partial g(x)} \|v\| \geq \lambda' \sqrt{g(x) - \min_z g(z)} 
\end{align*}
for  some $\lambda' >0$ and for all $x$ 
(see \cite{bolte2017error} and references therein). In our setting (coercive, semialgebraic, unique critical value), these conditions are equivalent and are sufficient for \eqref{eq:contractionFB} to hold true \cite{bolte2017error}.   On the other hand, \eqref{eq:contractionFB} implies a quadratic \L ojasiewicz inequality (KL with exponent $1/2$) or a global quadratic error bound in our setting \cite[Proposition 4.19]{garrigos2017convergence}. Since all these conditions are equivalent the results of this section actually hold true replacing \eqref{eq:contractionFB} by a quadratic error bound or a quadratic  \L ojasiewicz inequality showing a fundamental limit to the extension of \thmref{thm:stronglyConvex} beyond strong convexity.

\end{remark}


\section{Applications to saddle point problems and duality}\label{sec:applications}

We demonstrate how to apply the previous sections' results to several different parametric optimization pro\-blems in which one seeks to differentiate the solution mapping as a function of the parameters $\theta$. In each of the following subsections, $\Theta\subset\R^p$ is a connected open set on which Assumption \ref{ass:functionPathDifferentiable} will be required to hold for various operators.

\subsection{Differentiating the dual solution of a convex composite problem}
Consider the following composite minimization problem
\nnewq{\label{eq:primal}
\min\limits_{x\in\R^n} f_\theta(x) + g_\theta(K_\theta x)
}
where, for each $\theta\in\Theta$, $f_\theta\in C^{1,1}(\R^n)$ is a strongly convex function, $g_\theta\colon\R^m\to\exR$ is a proper closed convex function, $K_\theta\colon\R^n\to\R^m$ is a surjective linear operator, and $\xstar(\theta)$ is the unique solution (the objective is proper by surjectivity).

The goal is to differentiate the solution $\xstar$ with respect to $\theta$, for which we assume that $\nabla f_\theta$ and $\prox_{\gamma g_\theta}$ are path differentiable, jointly in $(\theta,x)$, so that \assref{ass:functionPathDifferentiable} holds here. It is then possible to directly apply \thmref{thm:stronglyConvex} to differentiate $\xstar$ since \assref{ass:functionPathDifferentiable} holds and $f_\theta$ is strongly convex. However, because of the coupling between $g_\theta$ and $K_\theta$ in \eqref{eq:primal}, computing $\consjac_{\xstar}$ through this approach would necessitate computing $\prox_{g_\theta\circ K_\theta}$, which is nontrivial even when $\prox_{g_\theta}$ is known (unless $K_\theta$ is a (semi)orthogonal matrix \cite[Lemma 2.8]{combettes2005signal}).

We can instead use the generalized duality of Fenchel-Rockafellar, which will decouple the linear operator $K_\theta$ from $g_\theta$ in a way that is especially useful if $K_\theta$ is surjective, which we will assume. The  dual problem of \eqref{eq:primal} is given, for each $\theta\in\Theta$, by
\nnewq{\label{eq:dual}
-\min\limits_{y\in\R^m} f_\theta^*(-K_\theta^*y)+g_\theta^*(y)
}
to which we can apply our results, with $f_\theta^*(-K_\theta^*\cdot)$ and $g_\theta^*$ taking the role of $f$ and $g$ in the assumptions and theorems. Note that $y \mapsto f_\theta^*(-K_\theta^*y)$ is indeed strongly convex. To be explicit, we take $T(\theta,y) = \prox_{\gamma g_\theta^*}(y)$ and $S(\theta,y) = (\theta, y + \gamma K_\theta \nabla f_\theta^*(-K_\theta^* y))$ with $H(\theta,y) = T(S(\theta,y))$, so that the fixed point equation we are considering the dual solution $\ystar(\theta)$ to satisfy is
\nnewq{\label{eq:dualres}
\res(\theta,\ystar(\theta))=\ystar(\theta)-\prox_{\gamma g_\theta^*}(\ystar(\theta) + \gamma K_\theta\nabla f_\theta^*(-K_\theta^*\ystar(\theta))) =0.
}
\begin{theorem}[Path differentiability of the dual solution of a composite problem]
\label{thm:structuredDuality}
Consider \eqref{eq:dual} where, for each $\theta\in\Theta$,  $f_\theta\in C^{1,1}(\R^n)$ is $\alpha$-strongly convex with $\beta$-Lipschitz continuous gradient, $g_\theta\colon\R^m\to\exR$ is a closed convex proper function and $K_\theta\in\R^{m\times n}$ is surjective with singular values in $[\underline{\lambda}, \bar{\lambda}]$ for some $0<\underline{\lambda}\leq \bar{\lambda}$, uniformly in $\theta$. Then $y \mapsto f_\theta^*(-K_\theta^*y)$ is $\underline{\lambda}^2 / \beta$ strongly convex and has a gradient which is $\bar{\lambda}^2  / \alpha$-Lipschitz continuous, uniformly in $\theta$.

Assume furthermore that $\prox_{\gamma g_\theta^*}$, $\nabla f_{\theta}^*$ and $K_\theta$ are path differentiable so that \assref{ass:functionPathDifferentiable} holds with $f_\theta^*\circ[-K_\theta^*]$ and $g_\theta^*$.
Then, the unique dual solution $\ystar(\theta)$  of \eqref{eq:dual} is path differentiable on $\Theta$ with a conservative Jacobian given for all $\theta\in\Theta$ by
\newq{
\consjac_{\ystar}\colon \theta\rightrightarrows\brac{\para{\Id_n - V(\Id_n - \gamma Z)}^{-1}\para{U-\gamma VW} \colon \sbrac{U\ V}\in \cljac_1, \sbrac{W\ Z}\in\cljac_2}
}
where
\newq{
\cljac_1 := \cljac_{T}(\theta,\ystar(\theta)+\gamma K_\theta\nabla f_\theta^*(-K_\theta^*\ystar(\theta)))\quad\quad\mbox{and}\quad\quad
\cljac_2 := \cljac_{\nabla (f_\theta\circ[-K_\theta^*])}(\theta,\ystar(\theta)).
}
and $\gamma$ is any number in $ (0,\frac{2\underline{\lambda}^2/\beta}{(\underline{\lambda}^2/\beta + \bar{\lambda}^2/\alpha)})$. 
\end{theorem}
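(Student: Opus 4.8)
The plan is to recognize the dual problem \eqref{eq:dual} as an instance of the composite problem \eqref{eq:optimizationProblem} in the variable $y$, with smooth part $\phi_\theta := f_\theta^* \circ [-K_\theta^*]$ playing the role of $f$ and nonsmooth part $g_\theta^*$ playing the role of $g$, and then to invoke \thmref{thm:stronglyConvex}. The only genuine work is to transfer the strong-convexity and Lipschitz-gradient constants through conjugation and composition with $-K_\theta^*$, and to verify that \assref{ass:functionPathDifferentiable} is met for the pair $(\phi_\theta,g_\theta^*)$; everything after that is a direct citation of \thmref{thm:stronglyConvex}.

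For the constants I would first appeal to the standard conjugacy correspondence (e.g. \cite[Theorem 18.15]{bauschke2011convex}): since $f_\theta$ is $\alpha$-strongly convex with $\beta$-Lipschitz gradient, $f_\theta^*$ is $C^{1,1}$, $\frac{1}{\beta}$-strongly convex, with $\frac{1}{\alpha}$-Lipschitz gradient, uniformly in $\theta$. Writing $A_\theta := -K_\theta^*$, so that $\nabla\phi_\theta(y) = A_\theta^*\nabla f_\theta^*(A_\theta y) = -K_\theta\nabla f_\theta^*(-K_\theta^* y)$, the strong convexity of $\phi_\theta$ follows from
\begin{align*}
\ip{\nabla\phi_\theta(y_1)-\nabla\phi_\theta(y_2),\,y_1-y_2}{} = \ip{\nabla f_\theta^*(A_\theta y_1)-\nabla f_\theta^*(A_\theta y_2),\,A_\theta(y_1-y_2)}{} \geq \frac{1}{\beta}\norm{A_\theta(y_1-y_2)}{}^2 \geq \frac{\underline{\lambda}^2}{\beta}\norm{y_1-y_2}{}^2,
\end{align*}
where the last inequality uses that $K_\theta^*$ has smallest singular value at least $\underline{\lambda}$; this is exactly where surjectivity of $K_\theta$ enters. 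Dually,
\begin{align*}
\norm{\nabla\phi_\theta(y_1)-\nabla\phi_\theta(y_2)}{} \leq \bar{\lambda}\,\norm{\nabla f_\theta^*(A_\theta y_1)-\nabla f_\theta^*(A_\theta y_2)}{} \leq \frac{\bar{\lambda}}{\alpha}\norm{A_\theta(y_1-y_2)}{} \leq \frac{\bar{\lambda}^2}{\alpha}\norm{y_1-y_2}{}
\end{align*}
gives the $\frac{\bar{\lambda}^2}{\alpha}$-Lipschitz constant for $\nabla\phi_\theta$, both bounds being uniform in $\theta$ by the uniform singular-value hypothesis.

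Next I would verify \assref{ass:functionPathDifferentiable} for $(\phi_\theta,g_\theta^*)$. The function $g_\theta^*$ is closed convex proper as the conjugate of a closed convex proper function, and $\prox_{\gamma g_\theta^*}$ is path differentiable by hypothesis. For the smooth part, $\nabla\phi_\theta(\theta,y) = -K_\theta\nabla f_\theta^*(-K_\theta^* y)$ is a composition of path differentiable maps: $(\theta,y)\mapsto -K_\theta^* y$ is path differentiable since $\theta\mapsto K_\theta$ is, $\nabla f_\theta^*$ is path differentiable by hypothesis, and left multiplication by $K_\theta$ preserves path differentiability; the chain rule for conservative Jacobians then yields joint path differentiability of $\nabla\phi_\theta$. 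With the constants $\alpha' = \underline{\lambda}^2/\beta$ and $\beta' = \bar{\lambda}^2/\alpha$ and $\gamma$ in the admissible interval, \thmref{thm:stronglyConvex} then applies with $\phi_\theta$ in place of $f_\theta$ and $g_\theta^*$ in place of $g_\theta$: since $\phi_\theta$ is $\alpha'$-strongly convex the dual objective is strongly convex, giving uniqueness of $\ystar(\theta)$, path differentiability on $\Theta$, and the displayed conservative Jacobian formula after identifying $T(\theta,y)=\prox_{\gamma g_\theta^*}(y)$ and $\B_\theta=\nabla\phi_\theta$, so that the forward step reads $y-\gamma\nabla\phi_\theta(y) = y+\gamma K_\theta\nabla f_\theta^*(-K_\theta^* y)$ as in \eqref{eq:dualres}.

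The main obstacle is the bookkeeping of the constants: one must track which of strong convexity and Lipschitz continuity of the gradient is exchanged by conjugation, and where each singular-value bound enters the composition, while ensuring the resulting constants are uniform in $\theta$. The injectivity of $K_\theta^*$ (equivalently surjectivity of $K_\theta$) together with the uniform lower bound $\underline{\lambda}$ is precisely what rescues strong convexity after composition with $-K_\theta^*$; without it the smooth part would only be convex and \thmref{thm:stronglyConvex} would not apply.
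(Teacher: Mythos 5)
Your proposal is correct and follows essentially the same route as the paper: transfer the strong-convexity and Lipschitz-gradient constants through conjugation and through composition with $-K_\theta^*$, verify \assref{ass:functionPathDifferentiable} for the pair $(f_\theta^*\circ[-K_\theta^*],\,g_\theta^*)$, and invoke \thmref{thm:stronglyConvex}. The only difference is cosmetic: you obtain the $\underline{\lambda}^2/\beta$-strong convexity of $y\mapsto f_\theta^*(-K_\theta^*y)$ from strong monotonicity of its gradient, whereas the paper writes $f_\theta^*(-K_\theta^*y)-\frac{\underline{\lambda}^2}{2\beta}\|y\|^2$ as a sum of two convex terms; both arguments rest on the same uniform lower singular-value bound for $K_\theta^*$ coming from the surjectivity of $K_\theta$.
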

\begin{proof}
For each $\theta\in\Theta$, the function $g_\theta^*\colon\R^m\to\exR$ is a closed convex proper function since $g_\theta$ is, meanwhile the function $f_\theta^*\colon\R^n\to\R$ is $\frac{1}{\beta}$-strongly convex and differentiable with $\frac{1}{\alpha}$-Lipschitz continuous gradient \cite[Theorem 18.15(vii)]{bauschke2011convex}. The function $f_\theta^*(-K_\theta^*y)$ has gradient $-K_\theta\circ\nabla f_\theta^*\circ -K_\theta^*$ which is $\frac{\bar{\lambda}^2}{\alpha}$-Lipschitz continuous because $\nabla f_\theta$ is Lipschitz continuous with constant $\frac{1}{\alpha}$ and both $K_\theta$ and $K_\theta^*$ are $\bar{\lambda}$-Lipschitz continuous.
To demonstrate that $f_\theta^*\circ-K_\theta^*$ is $\frac{\underline{\lambda}^2}{\beta}$-strongly convex, we have for all $y$ 
\begin{align*}
    f_\theta^*(-K_\theta^* y) - \frac{\underline{\lambda}^2}{2\beta} \|y\|^2 =  \left(f_\theta^*(-K_\theta^* y) - \frac{1}{2\beta} \|K_\theta^* y\|^2\right) + \left(\frac{1}{2\beta} \|K_\theta^* y\|^2 -  \frac{\underline{\lambda}^2}{2\beta} \|y\|^2\right).
\end{align*}
The first term, $f_\theta^*(-K_\theta^* y) - \frac{1}{2\beta} \|K_\theta^* y\|^2$, is convex as it is the composition of a convex function $f(\cdot)-\frac{1}{2\beta}\|\cdot\|^2$ and a linear map $-K_\theta^*$. Indeed, by $\frac{1}{\beta}$-strong convexity of $f_\theta^*$, the function $f_\theta^*(\cdot)-\frac{1}{2\beta}\|\cdot\|^2$ is necessarily convex \cite[Proposition 10.6]{bauschke2011convex}. The second term, $\frac{1}{2\beta} \left(\|K_\theta^* y\|^2 -  \underline{\lambda}^2 \|y\|^2\right)$, is convex because the smallest eigenvalue of $K_\theta K_\theta^*$ is $\underline{\lambda}^2$. Hence the claimed strong convexity modulus of $\frac{\underline{\lambda}^2}{\beta}$, justifying the first part of the theorem claiming regularity of $f_\theta^*\circ -K_\theta^*$. Then, using the assumption that $\prox_{\gamma g_\theta}, \nabla f_\theta^*$, and $K_\theta$ are all path differentiable so that \assref{ass:functionPathDifferentiable} holds, we are finally able to apply \thmref{thm:stronglyConvex} to \eqref{eq:dual} and its fixed point formulation \eqref{eq:dualres} and the desired results follow.
\end{proof}
\begin{remark}[Path differentiability of the primal solution]\label{rem:dualSolutionToPrimal}
We can recover the primal solution from the dual solution through the equation $\xstar(\theta) = \nabla f_\theta^*(-K_\theta^*\ystar(\theta))$, coming from the primal-dual optimality conditions, since $\nabla f_\theta^*$ is path differentiable. Indeed, the functions $\prox_{\gamma g_\theta^*}$ and $\nabla f_\theta^*$ are path differentiable if $\prox_{\gamma g_\theta}$ and $\nabla f_\theta$ are assumed to be path differentiable. By the Moreau decomposition \cite[Theorem 14.3(ii)]{bauschke2011convex} we can express $\prox_{\gamma g_\theta^*}(y) = y - \prox_{g_\theta/\gamma}(y/\gamma)$. Meanwhile for $\nabla f_\theta^*$, we can invoke the path differentiable inverse function theorem \cite[Corollary 2]{bolte2021nonsmooth} with $\nabla f_\theta^* = (\nabla f_\theta)^{-1}$, the assumptions of which hold due to the fact that $\nabla f_\theta$ is path differentiable and $f_\theta^*$ is both Lipschitz-smooth and strongly convex. 
\end{remark}

\begin{example}[Learning sparsity priors]
The problem of learning a sparsity prior can be seen as a bilevel optimization problem \cite{ghanem2021supervised, peyre2011learning, saiprasad2015sparsifying} which fits the framework of this subsection. Given some set of training data $\brac{(u_1,\hat{u}_1),\ldots,(u_q,\hat{u}_q)}$ where $u_i$ is the ground truth for some signal (e.g., an image) and $\hat{u}_i$ is a noisy observation of $u_i$, we seek to find an optimal linear operator $K_\theta\in\R^{s\times n}$, the so-called sparsity prior. The general form of the problem can be cast as the following bilevel optimization problem,
\newq{
\min\limits_{\theta\in\R^p}\sum\limits_{i=1}^q\frac{1}{2}\norm{u_i - x_i(\theta)}{2}^2\quad\mbox{such that, }\forall i\in\brac{1,\ldots,q},\quad x_i(\theta)\in\argmin\limits_{x_i\in\R^n}\frac{1}{2}\norm{x_i-\hat{u}_i}{2}^2 + \norm{K_\theta x_i}{1}
}
where $\theta\in\R^{sn}$ with $K_\theta :=\begin{bmatrix} \theta_{1,1} & \ldots & \theta_{1,n}\\ \vdots & 
\ddots & \vdots\\ \theta_{s,1} & \ldots & \theta_{s,n}\end{bmatrix}$. Assume that $\Theta \subset \R^{sn}$ is a connected open set such that, for all $\theta\in\Theta$, $K_\theta$ is surjective and its singular values are contained in $[\underline{\lambda},\bar{\lambda}]$ for some $0<\underline{\lambda}\leq\bar{\lambda}$. Then the lower level problem 
matches exactly that of \eqref{eq:primal} with $f_\theta(x) = \sum\limits_{i=1}^q\frac{1}{2}\norm{x_i-
\hat{u}_i}{2}^2$ and $g_\theta(x)=\sum\limits_{i=1}^q\frac{1}{2}\norm{K_\theta x_i}{1}$, which we write 
here as sums even though they are separable in $x_i$. Note that $K_\theta$ is obviously not surjective for all $\theta \in \R^{sn}$ because of the general parametrization chosen. Instead of fixing the required open set $\Theta$, one could employ a different parameterization of $K_\theta$ with constraints on parameters ensuring that $K_\theta$ remains surjective. Regardless, by assuming surjectivity, Theorem \ref{thm:structuredDuality} applies and we can continue.

The dual of the inner problem, is given, for each 
$i\in\brac{1,\ldots,q}$, by
\newq{
y_i(\theta)\in\argmin\limits_{\{y_i\in\R^s: \norm{y_i}{\infty}\leq 1\}}\frac{1}{2}\norm{K_\theta^*y_i-\hat{u}_i}{2}^2
}
which has a fixed point equation 
\newq{
y_i^\star = \proj_{\mathcal{D}}(y_i^\star + \gamma K_\theta(K_\theta^*y_i^\star-\hat{u}_i)),
}
where $\proj_{\mathcal{D}}$ is the projection onto the $\ell^\infty$ unit ball in $\R^s$, i.e., the mapping whose coordinates are given by
$z\mapsto \sign(z)\min(1,\absv{z})$ component-wise. Using the notation of \secref{sec:monotone}, we 
have $T(\theta,y_i) = \proj_{\mathcal{D}}(y_i)$ and $S(\theta,y)=y_i^\star + \gamma 
K_\theta(K_\theta^*y_i^\star-\hat{u}_i)$. The primal solution $\xstar$ can be recovered from the dual 
solution through the relationship given in \remref{rem:dualSolutionToPrimal}, for each $\theta\in\Theta$ and  
$i\in\brac{1,\ldots,q}$,
\newq{
\xstar_i(\theta) = \nabla f_\theta^* (-K_\theta^*\ystar_i(\theta)) \implies \xstar_i(\theta) = \hat{u}_i-K_\theta^*\ystar_i(\theta).
}
We emphasize the difference in our approach to those taken in previous works \cite{ghanem2021supervised, peyre2011learning, saiprasad2015sparsifying}. While \cite{peyre2011learning} relies on a smoothing process for the $\ell^1$-norm in the lower level problem, \cite{saiprasad2015sparsifying} assumes that $K_\theta$ is an orthogonal matrix in contrast to our assumption that $K_\theta$ is surjective. In \cite{ghanem2021supervised}, the authors use unrolling on the algorithm used to solve the lower-level problem rather than implicit differentiation as we do.
\end{example}

\subsection{Differentiating the solutions of min-max problems}
Consider the following min-max problem
\nnewq{\label{eq:minmax}
\min\limits_{x\in X}\max\limits_{y\in Y} \Phi_\theta(x,y)
}
where $X \subset \R^n$ is closed and convex, and $Y \subset \R^m$ is convex compact and, for each $\theta\in\Theta$, $\Phi_\theta \colon \R^n \times \R^m \to \R$ is continuous such that $-\Phi_\theta(x,\cdot)\colon\R^m\to\R$ and $\Phi_\theta(\cdot,y)\colon\R^n\to\R$ are $\alpha$-strongly convex for each $x$ and for each $y$, respectively. Assume also that $\Phi_{(\cdot)}(x,y)$ is Lipschitz continuous on $\Theta$ for all $(x,y)\in\R^n\times\R^m$. The very general form of this problem encompasses min-max problems with nonlinear couplings of the form considered in \cite{hamedani2018primal, hamedani2021primal}. The solution mapping for this problem incorporates the primal and dual variables together, $\theta\mapsto (\xstar(\theta),\ystar(\theta))$. The optimality condition can be written for each $\theta\in\Theta$ as
\newq{
\begin{pmatrix}0 \\ 0\end{pmatrix} \in \begin{pmatrix}\partial_x \Phi_\theta +N_X& 0\\ 0 & -\partial_y \Phi_\theta +N_Y \end{pmatrix}\begin{pmatrix}\xstar(\theta)\\ \ystar(\theta)\end{pmatrix}
}
where $N_X$ and $N_Y$ denote respectively the normal cones to $X$ and $Y$. This is a special case of \eqref{eq:monotoneInclusion} with $\A_\theta =\begin{pmatrix}\partial_x \Phi_\theta +N_X & 0\\ 0 & -\partial_y \Phi_\theta +N_Y \end{pmatrix}$ and $\B_\theta\equiv 0$. Indeed from strong convexity, for each $\theta\in\Theta$, $\A_\theta$ is $\alpha$ strongly monotone, and from closedness of $X$ and compactness of $Y$ it can be shown that the range of $I + \mathcal{A}_\theta$ is $\R^n \times \R^m$ so that $\A_\theta$ is maximal \cite[Theorem 12.12]{rockafellar1998variational}.

For the function $\fb_\theta$ defined in \assref{ass:pathDifferentiable} applied to this problem, we have $\fb_\theta(x) = \mathcal{R}_{\A_\theta}(x)$ so that $\res(\theta,x) = x - \mathcal{R}_{\A_\theta}(x)$, leading to the following result.
\begin{theorem}[Path differentiability of min-max solutions]
\label{thm:minmax}
Consider \eqref{eq:minmax} where, for each $\theta\in\Theta$, $-\Phi_\theta(x,\cdot)\colon\R^m\to\exR$ and $\Phi_\theta(\cdot,y)\colon\R^n\to\exR$ are both closed $\alpha$-strongly convex proper functions. Assume that $\mathcal{R}_{\A_{\theta}}$ is path differentiable so that \assref{ass:pathDifferentiable} holds with the conservative Jacobians defined in \eqref{def:jacobianChoice} with $\gamma\in(0,1/\alpha)$. Then, the solution mapping $\theta\mapsto(\xstar(\theta),\ystar(\theta))$ is unique and path differentiable on $\Theta$ with a conservative Jacobian given for each $\theta\in\Theta$ by 
\newq{
\consjac_{(\xstar,\ystar)} \colon \theta\rightrightarrows\{\para{\Id_{(n+m)}-[V_1\ V_2]}^{-1}U \colon [U\ V_1\ V_2]\in\cljac_T(\theta, (\xstar(\theta), \ystar(\theta)))\}.
}
\end{theorem}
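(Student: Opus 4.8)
The plan is to exhibit this problem as a special instance of the strongly monotone inclusion \eqref{eq:monotoneInclusion} and then invoke \thmref{thm:stronglyMonotone} (equivalently, to verify contractivity by hand and apply \lemref{lem:operatorNorm}). As recorded in the discussion preceding the statement, the saddle-point optimality conditions are an instance of \eqref{eq:monotoneInclusion} with the block-diagonal operator $\A_\theta$ whose two diagonal blocks are $\partial_x\Phi_\theta + N_X$ and $-\partial_y\Phi_\theta + N_Y$, and with $\B_\theta\equiv 0$; hence $\fb_\theta = \mathcal{R}_{\gamma\A_\theta}$ and $\res(\theta,\cdot) = \Id - \mathcal{R}_{\gamma\A_\theta}$. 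The entire argument reduces to checking that $\A_\theta$ is maximal and $\alpha$-strongly monotone, uniformly in $\theta$, so that \assref{ass:pathDifferentiable} and the contractivity condition hold, and then specializing the conservative Jacobian formula to $\B_\theta = 0$.

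First I would establish $\alpha$-strong monotonicity of $\A_\theta$. Since $\Phi_\theta(\cdot,y)$ is $\alpha$-strongly convex, $\partial_x\Phi_\theta$ is $\alpha$-strongly monotone; since $-\Phi_\theta(x,\cdot)$ is $\alpha$-strongly convex, the map $-\partial_y\Phi_\theta$ coincides with the (convex) subdifferential of $-\Phi_\theta(x,\cdot)$ and is therefore $\alpha$-strongly monotone as well. As $N_X = \partial\iota_X$ and $N_Y = \partial\iota_Y$ are monotone, each diagonal block is $\alpha$-strongly monotone, and a block-diagonal operator inherits the strong monotonicity modulus of its blocks. The delicate point is maximality: because the lower block carries $-\partial_y\Phi_\theta$, the operator $\A_\theta$ is not itself a subdifferential, so the subdifferential sum rule is unavailable. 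Instead I would use Minty's criterion --- a monotone operator is maximal if and only if $\mathrm{range}(\Id + \A_\theta) = \R^{n+m}$ --- and verify the range condition from closedness of $X$ and compactness of $Y$, exactly as indicated in the preamble via \cite[Theorem 12.12]{rockafellar1998variational}.

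With maximal $\alpha$-strong monotonicity secured, \assref{ass:pathDifferentiable} holds: $\mathcal{R}_{\gamma\A_\theta}$ is path differentiable by hypothesis and $\B_\theta = 0$ is trivially so. Contractivity then follows directly, since $\gamma\A_\theta$ is $\gamma\alpha$-strongly monotone and its resolvent $\mathcal{R}_{\gamma\A_\theta}$ is Lipschitz with constant $\frac{1}{1+\gamma\alpha} < 1$ by \cite[Proposition 23.13]{bauschke2011convex}; consequently $\norm{\consjac_{x,\fb}(\theta,\cdot)}{\mathrm{op}} = \norm{\cljac_{x,T}(\theta,\cdot)}{\mathrm{op}} \leq \frac{1}{1+\gamma\alpha} < 1$ by \lemref{lem:clarkeBound}, for every $\gamma > 0$ and in particular on $(0,1/\alpha)$. (One may equally invoke \thmref{thm:stronglyMonotone} after choosing any $\beta > 0$ small enough that $\gamma < 2\alpha/(\alpha+\beta)^2$, which is feasible precisely because $\gamma < 2/\alpha$.) Thus \lemref{lem:operatorNorm} applies and delivers uniqueness and path differentiability of $\theta\mapsto(\xstar(\theta),\ystar(\theta))$.

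Finally I would read off the conservative Jacobian by specializing \lemref{lem:operatorNorm} to $\B_\theta = 0$: the Clarke Jacobian of $\B$ is $\{[0\ 0]\}$, so $W = 0$ and $Z = 0$, and $(\Id_{n+m} - V(\Id_{n+m} - \gamma Z))^{-1}(U - \gamma V W)$ collapses to $(\Id_{n+m} - V)^{-1}U$ with $[U\ V]\in\cljac_T(\theta,(\xstar(\theta),\ystar(\theta)))$ and $T = \mathcal{R}_{\gamma\A_\theta}$. Writing the spatial block $V$ in its $x$- and $y$-parts as $V = [V_1\ V_2]$, with invertibility of $\Id_{n+m} - [V_1\ V_2]$ coming from contractivity, this is exactly the asserted formula. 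I expect the only real obstacle to be the maximality of $\A_\theta$: the saddle structure forbids the subdifferential sum rule and forces a Minty/range argument, whereas every remaining step is either an immediate consequence of strong convexity or a routine reduction of the general theorem to the degenerate case $\B_\theta\equiv 0$.
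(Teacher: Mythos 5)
Your proposal is correct and follows essentially the same route as the paper: both reduce the saddle-point optimality system to the strongly monotone inclusion framework with $\B_\theta\equiv 0$ and specialize the conservative Jacobian formula accordingly (the paper simply invokes \thmref{thm:stronglyMonotone} with the artificial choice $\beta=(\sqrt{2}-1)\alpha$, for which $2\alpha/(\alpha+\beta)^2$ equals exactly $1/\alpha$, rather than checking the contractivity of $\mathcal{R}_{\gamma\A_\theta}$ directly as you do). Your additional care over the maximality of $\A_\theta$ via Minty's range criterion is handled in the paper's preamble to the theorem, exactly as you note.
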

\begin{proof}
For each $\theta\in\Theta$, the maximal monotone operator $\A_\theta$ is $\alpha$-strongly monotone by the $\alpha$-strong convexity and $\alpha$-strong concavity of $\Phi_\theta(\cdot,y)$ and $\Phi_\theta(x,\cdot)$ respectively. Using this fact with \assref{ass:pathDifferentiable} and the conservative Jacobians defined in \eqref{def:jacobianChoice}, we can apply \thmref{thm:stronglyMonotone} with any $\beta\in(0,\infty)$ since $\B_\theta\equiv 0$. In particular, we can take $\beta=(\sqrt{2}-1)\alpha$ so that applying \thmref{thm:stronglyMonotone} requires $\gamma\in(0,1/\alpha)$. Therefore, the solution $(\xstar(\theta),\ystar(\theta))$ is path differentiable with a conservative Jacobian given by
\newq{
\consjac_{(\xstar,\ystar)} \colon \theta\rightrightarrows\{\para{\Id_{(n+m)}-[V_1\ V_2]}^{-1}U \colon [U\ V_1\ V_2]\in\cljac_T(\theta, (\xstar(\theta), \ystar(\theta)))\}.
}
\end{proof}

\subsection{Differentiating the solutions of primal-dual problems}
The min-max problem \eqref{eq:minmax} from the previous subsection is general in that it does not assume a particular coupling between $x$ and $y$. We turn now to primal-dual optimization with linear coupling, a well-known problem template that was studied, for instance, in \cite{chambolle2022accelerated, chambolle2011first, chambolle2016ergodic, condat2013primal, liang2018local, silveti2021stochastic} and allows to model many different problems coming from computer vision and machine learning. Consider the parametrized primal-dual problem
\begin{equation}\label{eq:primalDual}
\min\limits_{x\in\R^n} g_\theta(x) + \max\limits_{y\in\R^m}  \ip{K_\theta x,y}{} - f_\theta^*(y)
\end{equation}
where, for all $\theta\in\Theta$, $K_\theta\in\R^{m\times n}$ is a linear operator and both $f_\theta^*\colon\R^m\to\exR$ and $g_\theta\colon\R^n\to\exR$ are closed convex proper functions. In contrast to \cite{chambolle2021learning, bogensperger2022convergence}, we allow for the functions $g_\theta$ and $f^*_\theta$ to be parametrized by $\theta$, in addition to the linear operator $K_\theta$. For each $\theta\in\Theta$, the optimality conditions for a solution $(\xstar(\theta),\ystar(\theta))$ to this problem are
\begin{equation*}
K_\theta\xstar(\theta) \in \partial f_\theta^*(\ystar(\theta))\quad\mbox{and}\quad-K_\theta^*\ystar(\theta)\in\partial g_\theta(\xstar(\theta))
\end{equation*}
which can be equivalently written as
\begin{equation*}
\begin{pmatrix}0 \\ 0\end{pmatrix} \in \underbrace{\begin{pmatrix}\partial g_\theta & 0\\ 0 & \partial f_\theta^*\end{pmatrix}}_{\A_\theta}\begin{pmatrix}\xstar(\theta)\\\ystar(\theta)\end{pmatrix} + \underbrace{\begin{pmatrix}0 & K_\theta^*\\-K_\theta & 0\end{pmatrix}}_{\B_\theta}\begin{pmatrix}\xstar(\theta)\\\ystar(\theta)\end{pmatrix}
\end{equation*}
with $\A_\theta$ maximal monotone and $\B_\theta$ maximal monotone and $\beta$-Lipschitz for some $\beta = \norm{K_\theta^*}{\mathrm{op}}>0$. Despite the fact that the operator $\B_\theta$ is not cocoercive, we can still apply the results developed in \secref{sec:monotone} because it is $\beta$-Lipschitz.

\begin{theorem}[Path differentiability for primal-dual problems]\label{thm:structuredMinMax}
Consider \eqref{eq:primalDual} where, for each $\theta\in\Theta$, $g_\theta\colon\R^n\to\exR$ and $f^*_\theta\colon\R^m\to\exR$ are closed $\alpha$-strongly convex proper functions and $\beta\geq \norm{K_\theta}{\mathrm{op}}> 0$ for some $\beta>0$. Assume also, for each $\theta\in\Theta$, that $\prox_{g_\theta}$ and $\prox_{f_\theta}$ are path differentiable with conservative Jacobians chosen according to \eqref{def:jacobianChoice} so that \assref{ass:pathDifferentiable} holds with $\gamma\in(0,\frac{2\alpha}{(\alpha+\beta)^2})$. Then the mapping $\theta\mapsto (\xstar(\theta),\ystar(\theta))$ is unique and path differentiable on $\Theta$ with a conservative Jacobian given for each $\theta\in\Theta$ by
\newq{
\consjac_{(\xstar,\ystar)} \colon\theta\rightrightarrows\brac{-[V_1\ V_2]^{-1}U\colon U\in\consjac_{\theta,\res}(\theta,\xstar(\theta),\ystar(\theta)), [V_1\ V_2]\in\consjac_{(x,y),\res}(\theta,\xstar(\theta),\ystar(\theta))}
}
where
\newq{
\hspace{-12mm}&\consjac_{(x,y),\res}\colon (\theta,\xstar(\theta),\ystar(\theta))\rightrightarrows \\
&\begin{bmatrix}\Id_m - \cljac_{\prox_{\gamma g_\theta}} (\theta,\xstar(\theta) - \gamma K_\theta^*\ystar(\theta)) & -\cljac_{\prox_{\gamma g_\theta}}(\theta,\xstar(\theta)-\gamma K_\theta^*\ystar(\theta))\times(-\gamma K_\theta^*)\\ -\cljac_{\prox_{\gamma f^*_\theta}}(\theta,\ystar(\theta)+\gamma K_\theta\xstar(\theta))\times(\gamma K_\theta) & \Id_n - \cljac_{\prox_{\gamma f^*_\theta}}(\theta,\ystar(\theta) + \gamma K_\theta\xstar(\theta))\end{bmatrix}.
}
\end{theorem}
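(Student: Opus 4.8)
The plan is to recognize \eqref{eq:primalDual} as an instance of the strongly monotone inclusion \eqref{eq:monotoneInclusion} and reduce the whole statement to a direct application of \thmref{thm:stronglyMonotone}. First I would verify the two structural hypotheses. Since $g_\theta$ and $f_\theta^*$ are $\alpha$-strongly convex, their subdifferentials are $\alpha$-strongly monotone, so the block-diagonal operator $\A_\theta = \mathrm{diag}(\partial g_\theta, \partial f_\theta^*)$ is $\alpha$-strongly monotone and maximal. For $\B_\theta$, the key observation is that it is linear and skew-symmetric, hence $\ip{\B_\theta z, z}{}=0$ for all $z$, which gives monotonicity with modulus $0$ and, being single-valued and everywhere defined, maximality; moreover a short computation with $\B_\theta^T\B_\theta = \begin{pmatrix}K_\theta^*K_\theta & 0\\ 0 & K_\theta K_\theta^*\end{pmatrix}$ shows $\norm{\B_\theta}{\mathrm{op}} = \norm{K_\theta}{\mathrm{op}}\leq\beta$, so $\B_\theta$ is $\beta$-Lipschitz. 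This places us exactly in the case of \thmref{thm:stronglyMonotone} where $\A_\theta$ is the strongly monotone operator and $\B_\theta$ is merely Lipschitz.

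Next I would establish \assref{ass:pathDifferentiable}. Because $\A_\theta$ is block-diagonal, its resolvent splits as $\mathcal{R}_{\gamma\A_\theta}(x,y) = (\prox_{\gamma g_\theta}(x), \prox_{\gamma f_\theta^*}(y))$, so it suffices to show path differentiability of the two proxes jointly in $(\theta,\cdot)$. The term $\prox_{\gamma g_\theta}$ is covered by hypothesis; for $\prox_{\gamma f_\theta^*}$ I would pass through $\prox_{f_\theta}$ using Moreau's decomposition exactly as in \remref{rem:dualSolutionToPrimal}, so that path differentiability of $\prox_{f_\theta}$ transfers to $\prox_{\gamma f_\theta^*}$. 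With \assref{ass:pathDifferentiable} in hand and $\gamma\in(0,\tfrac{2\alpha}{(\alpha+\beta)^2})$, \thmref{thm:stronglyMonotone} yields at once the contractivity of $\res$, the uniqueness and path differentiability of $(\xstar,\ystar)$, and the abstract conservative Jacobian $\theta\rightrightarrows\{-V^{-1}U\}$ supplied by \lemref{lem:operatorNorm} and \thmref{thm:nonsmoothImplicit}.

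The remaining work is purely computational: turning the abstract formula into the explicit block matrix of the statement. Evaluating $x-\gamma\B_\theta(x)$ at the solution $(\xstar,\ystar)$ produces the prox arguments $\xstar-\gamma K_\theta^*\ystar$ and $\ystar+\gamma K_\theta\xstar$ appearing in the two Clarke Jacobians. Since $T=\mathcal{R}_{\gamma\A_\theta}$ is block-diagonal, $\cljac_{(x,y),T}$ is block-diagonal with blocks $\cljac_{\prox_{\gamma g_\theta}}$ and $\cljac_{\prox_{\gamma f_\theta^*}}$, while $\B_\theta$ being linear forces $\cljac_{(x,y),\B_\theta}=\begin{pmatrix}0 & K_\theta^*\\ -K_\theta & 0\end{pmatrix}$. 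Substituting these into $\consjac_{(x,y),\res}=\Id_{n+m}-\cljac_{(x,y),T}\times(\Id_{n+m}-\gamma\cljac_{(x,y),\B_\theta})$ from \remref{rem:choiceOfJacobian} and expanding the two-by-two block product gives precisely the claimed matrix, with off-diagonal blocks $\gamma\cljac_{\prox_{\gamma g_\theta}}K_\theta^*$ and $-\gamma\cljac_{\prox_{\gamma f^*_\theta}}K_\theta$.

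I do not anticipate a serious obstacle, since the argument is a template instantiation of \thmref{thm:stronglyMonotone}. The only genuinely delicate points are the transfer of path differentiability from $\prox_{f_\theta}$ to the conjugate prox $\prox_{\gamma f_\theta^*}$ via Moreau's identity, and the bookkeeping required to keep the block structure (and the dimensions $n$ versus $m$) consistent when multiplying out the block product.
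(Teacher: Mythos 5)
Your proposal is correct and follows essentially the same route as the paper's proof: identify the primal--dual optimality system as the strongly monotone inclusion with block-diagonal $\A_\theta=\mathrm{diag}(\partial g_\theta,\partial f_\theta^*)$ and skew linear coupling $\B_\theta$, apply \thmref{thm:stronglyMonotone} with $\gamma\in(0,\tfrac{2\alpha}{(\alpha+\beta)^2})$, and expand the block product to obtain the explicit Jacobian. If anything, you are more explicit than the paper on two points it leaves implicit: the verification that the skew operator $\B_\theta$ is maximal monotone with $\norm{\B_\theta}{\mathrm{op}}=\norm{K_\theta}{\mathrm{op}}\leq\beta$, and the transfer of path differentiability from $\prox_{f_\theta}$ to $\prox_{\gamma f_\theta^*}$ via Moreau's decomposition.
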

\begin{proof}
For each $\theta\in\Theta$, due to the assumed strong convexity of $g_\theta$ and $f_\theta^*$, the operator $\A_\theta = \begin{pmatrix} \partial g_\theta & 0\\ 0 & \partial f_\theta^*\end{pmatrix}$ is strongly monotone with some constant $\alpha>0$ and, since $K_\theta$ is a linear operator, $\B_\theta=\begin{pmatrix}0 & K_\theta^*\\-K_\theta & 0\end{pmatrix}$ is maximal monotone and Lipschitz with constant $\beta$. Combining this with \assref{ass:pathDifferentiable} and the conservative Jacobians chosen in \eqref{def:jacobianChoice}, the conditions to apply \thmref{thm:stronglyMonotone} with $\gamma\in(0,\frac{2\alpha}{(\alpha+\beta)^2})$ are met and the function $\theta\mapsto(\xstar(\theta),\ystar(\theta))$ is path differentiable. More explicitly, we have for all $\theta\in\Theta$, $(x,y)\in\R^n\times\R^m$,
\newq{
\fb(\theta,x,y) = \begin{pmatrix}\prox_{\gamma g_\theta}(x-\gamma K_\theta^*y) \\ \prox_{\gamma f_\theta^*}(y+\gamma K_\theta x)\end{pmatrix}
}
 so that 
\newq{
\res(\theta,\xstar(\theta),\ystar(\theta))=\begin{pmatrix}\xstar(\theta)\\\ystar(\theta)\end{pmatrix} - \begin{pmatrix}\prox_{\gamma g_\theta}(\xstar(\theta)-\gamma K_\theta^*\ystar(\theta)) \\ \prox_{\gamma f_\theta^*}(\ystar(\theta)+\gamma K_\theta \xstar(\theta))\end{pmatrix}.
}
Using  \eqref{def:jacobianChoice}, the resulting conservative Jacobian for $(\xstar(\theta),\ystar(\theta))$ on $\Theta$ is
\newq{
&\consjac_{(\xstar,\ystar)} \colon \theta\rightrightarrows\\
&\{\para{\Id_{n+m}-[V_1\ V_2](\Id_{n+m}-\gamma [Z_1\ Z_2])}^{-1}\para{U-\gamma [V_1\ V_2]W} \colon [U\ V_1\ V_2]\in\cljac_1, [W\ Z_1\ Z_2]\in \cljac_2\}.
}
where 
\newq{
\cljac_1&=\cljac_T(\theta, \xstar(\theta)-\gamma K_\theta^*\ystar(\theta), \ystar(\theta)+\gamma K_\theta \xstar(\theta))\\
} 
and 
\newq{
\cljac_2=\cljac_{\B}(\theta, \xstar(\theta), \ystar(\theta)).
}
Alternatively, we can write
\newq{
\consjac_{(\xstar,\ystar)} \colon \theta\rightrightarrows\{\sbrac{V_1\ V_2}^{-1}U \colon U \in \consjac_{\theta, \res}(\theta,\xstar(\theta),\ystar(\theta)), [V_1\ V_2]\in\consjac_{(x,y),\res}(\theta,\xstar(\theta),\ystar(\theta))\}
}
where
\newq{
\hspace{-12mm}&\consjac_{(x,y),\res}\colon (\theta,\xstar(\theta),\ystar(\theta))\rightrightarrows \\
&\begin{bmatrix}\Id_m - \cljac_{\prox_{\gamma g_\theta}} (\theta,\xstar(\theta) - \gamma K_\theta^*\ystar(\theta)) & -\cljac_{\prox_{\gamma g_\theta}}(\theta,\xstar(\theta)-\gamma K_\theta^*\ystar(\theta))\times(-\gamma K_\theta^*)\\ -\cljac_{\prox_{\gamma f^*_\theta}}(\theta,\ystar(\theta)+\gamma K_\theta\xstar(\theta))\times(\gamma K_\theta) & \Id_n - \cljac_{\prox_{\gamma f^*_\theta}}(\theta,\ystar(\theta) + \gamma K_\theta\xstar(\theta))\end{bmatrix}.
}
\end{proof}
As in \cite{bogensperger2022convergence} and \cite{chambolle2021learning}, we have assumed that both $g_\theta$ and $f_\theta^*$ are strongly convex; in contrast to \cite{bogensperger2022convergence} we do not assume that $\prox_{g_\theta}$ or $\prox_{f_\theta^*}$ are differentiable at any specific points, nor do we assume that $g_\theta$ or $f_\theta$ are twice differentiable as in \cite{chambolle2021learning}.

In \cite{chambolle2021learning} the authors consider using a bilevel optimization problem to learn the best discretization of the total variation for inverse problems in imaging. The proposed bilevel problem was further studied in \cite{bogensperger2022convergence} where the authors noted the theoretical difficulties in differentiating the solution to a nonsmooth optimization problem with respect to some parameters. Indeed, they have no guarantees that their algorithm will avoid the set of points where the prox operator is not differentiable, nor can they show that the solutions will be points of differentiability. On the other hand, so long as the prox operator is path differentiable, our results apply despite these obstacles, and one can compute implicit conservative gradients.

\subsection{Automatic differentiation of algorithms}

The results of \thmref{thm:stronglyMonotone} (in particular, the fact that $\res$ is contractive) imply that the forward-backward splitting algorithm applied to solve monotone inclusions of the form in the theorem will converge linearly for $\gamma\in\left(0,\frac{2\alpha^2}{(\alpha+\beta)^2}\right)$. We illustrate our results with implicit differentiation which only requires the solution of the inclusion problem and does not depend on the algorithm used to solve it. Yet, it is worth emphasizing that under \assref{ass:pathDifferentiable}, the contractivity property in \defref{cond:contractivity} is sufficient to apply the convergence result of \cite{bolte2022automatic} to the forward-backward algorithm in our context. More precisely, Definition \ref{cond:contractivity} is precisely the same as \cite[Assumption 1]{bolte2022automatic} applied to the forward-backward algorithm to solve \eqref{eq:introProblem}. Therefore, in addition to path differentiability of the solution map, assumptions of Theorem \ref{thm:stronglyMonotone} provide a sufficient condition to ensure that automatic differentiation of the forward-backward algorithm generates a sequence of conservative jacobians such that conservativity is preserved asymptotically: the limits of iterative differentiation conservative jacobians form a conservative jacobian for the solution of \eqref{eq:introProblem} \cite[Corollary 1 and 2]{bolte2022automatic}. 

As a consequence, \thmref{thm:stronglyMonotone} implies that the iterative derivative convergence results of \cite{bolte2022automatic} apply to the forward-backward algorithm in all special cases described in the paper: \thmref{thm:stronglyConvex} for strongly convex optimization problems, \thmref{thm:structuredDuality} for solutions to primal problems obtained from their  dual, \thmref{thm:minmax} for general strongly monotone saddle point problems and \thmref{thm:structuredMinMax} for strongly monotone structured saddle point problems.


\section{Conclusion}\label{sec:conclusion}
We have presented sufficient conditions in the form of strong monotonicity, under which the path differentiability of the solution to a monotone inclusion problem is satisfied. As special cases, we have derived conditions that ensure path differentiability of solutions to a large class of nonsmooth parametric convex optimization problems - those which can be written as the sum of two parametric convex functions, one smooth and one possibly nonsmooth. By expressing the monotone inclusions as equivalent fixed point equations using the resolvent mapping, we were able to leverage path differentiability and the recently developed nonsmooth implicit path differentiation theorem of \cite[Corollary 1]{bolte2021nonsmooth} to deduce regularity of $\xstar$. Most importantly, we were able to characterize and give a formula for a conservative Jacobian of $\xstar$ with respect to $\theta$ using only the Clarke Jacobians associated with the resolvent mapping $\mathcal{R}_{\gamma \A_\theta}$ and the operator $\B_\theta$.

While this work is primarily theoretical, our results also lend insight to practical applications, e.g., the design of implicit neural network layers defined using convex optimization problems. Ensuring that an implicit layer is compatible with training is an important part of implicit layer design and, consequently, ensuring the invertibility condition is a necessary part of guaranteeing that training will work (c.f., \cite[Section 5]{bolte2021nonsmooth}). Besides this, our results highlight the relevancy of the typical strong convexity assumption made on the lower-level problem of a bilevel optimization problem to ensure implicit conservative Jacobians will exist when this lower-level problem is nonsmooth.

It is important also to understand the dependence of $\xstar$ on the step size $\gamma$ taken in the definition of $\fb$. In the smooth case, where $\A_\theta\equiv 0$ and $\B_\theta = \nabla f_\theta$ for some twice differentiable convex function $f_\theta$, there is no dependence on $\gamma$ which is to be expected since $0=\B_\theta(\xstar)$ can be differentiated directly without introducing $\gamma$. We delay exploring such questions for future work.

At a few points in the paper, we encounter objects which are not canonical or which can be chosen in multiple ways. We collect here these instances and elaborate on why we've chosen the way we have and the possible consequences of choosing to formulate things differently.

\paragraph{Choice of $\fb$} Our selection for $\fb$ in $\res$ can be attributed to the additive structure of $\A_\theta+\B_\theta$ and the Lipschitz continuity of $\B_\theta$. There are alternatives, for instance, considering the resolvent directly $\mathcal{R}_{\A_\theta+\B_\theta}$ or considering Douglas-Rachford, Peaceman-Rachford, etc, and extensions of this flavor are a matter for future research.

\paragraph{Choice of $\consjac_\fb$} We use multiplication of Clarke Jacobians in the formula for $\consjac_{\fb}$ given in \eqref{def:jacobianChoice} because it allows to obtain sufficient condition on problem data in \eqref{eq:monotoneInclusion} to ensure the contractivity of the residual equation. On the other hand, all the corollaries  of the paper would hold similarly with arbitrary conservative Jacobians for $\mathcal{R}_{\gamma \A}$ and $\B$ combined in a similar way, provided that this is compatible with the contractivity in \defref{cond:contractivity}. However, in this case, the contractivity defined in \defref{cond:contractivity} has to be explicitly assumed, not deduced from properties of problem data, because conservative Jacobians can be changed on a set of measure zero.

\paragraph{Choice of $\Theta$} The set $\Theta$ could be the whole space $\R^p$ or possibly a subset, for example if one of the operators $\A_\theta$ or $\B_\theta$ is not defined for every $\theta\in\R^p$, or if some of conditions (Lipschitz continuity, strong monotonicity, etc) can only be ensured to hold on some subset. The set $\Theta$ can also be taken as a neighborhood of some point $\bar{\theta}\in\R^p$ of interest, in which case it's possible to obtain local versions of all of our later results. For these local versions, we need only to assume that the contractivity in \defref{cond:contractivity} holds at the single point $\bar{\theta}$, from which it follows that there is some open neighborhood $\Theta$ on which the inequality must hold.


\section*{Acknowledgements}
The authors acknowledge the support of the Air Force Office of Scientific Research, Air Force Material Command, USAF, under grant numbers FA9550-19-1-7026. JB and EP acknowledge the support of ANR-3IA Artificial and Natural Intelligence Toulouse Institute, and thank Air Force Office of Scientific Research, Air Force Material Command, USAF, under grant numbers  FA8655-22-1-7012 ANR MasDol. JB  acknowledges the support of ANR Chess, grant ANR-17-EURE-0010.

\appendix
\small
\section{Appendix}
The next lemma is a general result that gives a bound on the Lipschitz constant of the forward mapping $\Id-\gamma \B$ in terms of the Lipschitz constant $\beta$ of the maximal monotone operator $\B$. This bound is slightly better than the more obvious bound $1+\gamma \beta$, and this improvement is crucial to prove \thmref{thm:stronglyMonotone}.
\begin{lemma}\label{lem:lipschitzConstant}
Let $\B\colon\R^n\to\R^n$ be a $\beta$-Lipschitz continuous maximal monotone operator, then the map $\Id-\gamma\B_\theta$ is $\sqrt{1+\gamma^2\beta^2}$-Lipschitz continuous.
\end{lemma}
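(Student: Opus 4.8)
The plan is to avoid the naive triangle-inequality estimate $\norm{(x-\gamma\B(x))-(y-\gamma\B(y))}{}\leq(1+\gamma\beta)\norm{x-y}{}$ and instead to work directly with squared norms, where the monotonicity of $\B$ can be exploited to eliminate a cross term. This elimination is precisely what produces the sharper constant $\sqrt{1+\gamma^2\beta^2}$ in place of the obvious $1+\gamma\beta$, which (as the surrounding text notes) is crucial for \thmref{thm:stronglyMonotone}.

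First I would fix arbitrary $x,y\in\R^n$ and write $u:=x-y$ and $v:=\B(x)-\B(y)$, so that the quantity to control is $\norm{u-\gamma v}{}^2$. Expanding gives $\norm{u-\gamma v}{}^2=\norm{u}{}^2-2\gamma\ip{u,v}{}+\gamma^2\norm{v}{}^2$. The key step is then to discard the middle term: since $\B$ is monotone, $\ip{v,u}{}=\ip{\B(x)-\B(y),x-y}{}\geq 0$, so $-2\gamma\ip{u,v}{}\leq 0$ and hence $\norm{u-\gamma v}{}^2\leq\norm{u}{}^2+\gamma^2\norm{v}{}^2$. Finally, invoking the $\beta$-Lipschitz continuity of $\B$ to bound $\norm{v}{}\leq\beta\norm{u}{}$ and substituting yields $\norm{u-\gamma v}{}^2\leq(1+\gamma^2\beta^2)\norm{u}{}^2$; taking square roots gives the claim.

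There is essentially no technical obstacle here: the entire content is the single observation that monotonicity renders the cross term nonnegative, so that it may simply be thrown away. The only point requiring care is to resist estimating the norm directly (which loses the improvement) and instead to pass through the square, where the monotone structure becomes visible and interacts cleanly with the Lipschitz bound. I would also remark that the constant is attained in the limit when $v\perp u$ with $\norm{v}{}=\beta\norm{u}{}$, so the estimate is sharp among all maps built from a $\beta$-Lipschitz monotone $\B$.
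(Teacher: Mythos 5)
Your proof is correct and is essentially identical to the paper's: both expand $\norm{(\Id-\gamma\B)(x)-(\Id-\gamma\B)(y)}{}^2$, drop the cross term $-2\gamma\ip{x-y,\B(x)-\B(y)}{}\leq 0$ by monotonicity, bound the remaining term via $\beta$-Lipschitz continuity, and take square roots. Your closing remark on sharpness is a small bonus not present in the paper, but the argument itself is the same.
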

\begin{proof}
For all $x,y\in\R^n$,
\newq{
\norm{(\Id-\gamma\B)(x) - (\Id-\gamma\B)(y)}{}^2 &= \norm{x-y}{}^2 -2\gamma\ip{x-y, \B(x) - \B(y)}{} + \norm{\gamma (\B(x) - \B(y))}{}^2\\
&\leq \norm{x-y}{}^2 + \gamma^2\norm{\B(x) - \B(y)}{}^2\\
&\leq (1+\gamma^2\beta^2)\norm{x-y}{}^2
}
where we have used the monotonicity of $\B$ followed by the $\beta$-Lipschitz continuity of $\B$ for the first and second inequalities, respectively. Thus, taking square roots, the mapping $\Id-\gamma\B$ is $\sqrt{1+\gamma^2\beta^2}$-Lipschitz continuous.
\end{proof}
\begin{small}
\bibliographystyle{plain}
\bibliography{references}
\end{small}
\end{document}